
\documentclass{article}

\usepackage{microtype}
\usepackage{graphicx}
\usepackage{booktabs} 





\usepackage{xcolor}
\usepackage{amsmath,amssymb,amsfonts,amsthm}
\usepackage{mathtools}
\usepackage{bm}
\usepackage{dsfont}
\usepackage[]{algorithmic, algorithm}
\usepackage{enumitem}
\usepackage{natbib}
\usepackage{verbatim}


\usepackage{multirow}
\usepackage{longtable}

\usepackage[hidelinks]{hyperref}
	
\usepackage[capitalize,nameinlink]{cleveref}
\usepackage{subcaption} 
\usepackage{caption}

\usepackage{url}
\usepackage{cite}




\let\temp\phi
\let\phi\varphi
\let\varphi\temp

\let\temp\varepsilon
\let\epsilon\varepsilon
\let\varepsilon\temp

\DeclareMathOperator*{\argmin}{arg\,min}

\newcommand{\iprod}[2]{\langle #1, #2 \rangle}

\newcommand{\R}{\mathbb{R}}

\newcommand{\N}{\mathbb{N}}

\newcommand{\E}{\mathbb{E}}






\newcommand{\Adam}{{\texttt{Adam}}}
\newcommand{\SGD}{{\texttt{SGD}}}



\newcommand{\cosine}{\texttt{cosine}}
\newcommand{\wsd}{\texttt{wsd}}
\newcommand{\invsqrt}{\texttt{1/sqrt}}

\definecolor{kleinblue}{RGB}{0, 47, 167}
\definecolor{royalblue}{RGB}{0, 33, 115}
\hypersetup{
	colorlinks = True,
	linkcolor = black,
	citecolor=royalblue,
	urlcolor=royalblue
} 

\definecolor{todored}{RGB}{189, 30, 30}

\definecolor{nypink}{RGB}{216, 131, 115}
\definecolor{commblue}{RGB}{0, 66, 102}

\definecolor{takeawaycolor}{RGB}{155, 157, 137}

\usepackage[colorinlistoftodos]{todonotes}

\newcommand{\takeaway}[1]{\todo[inline,bordercolor=takeawaycolor,backgroundcolor=takeawaycolor!20,linecolor=takeawaycolor]{\textbf{Takeaway: }#1}}



\usepackage[accepted]{icml2025}

\theoremstyle{plain}
\newtheorem{theorem}{Theorem}[section]

\newtheorem{lemma}[theorem]{Lemma}
\newtheorem{corollary}[theorem]{Corollary}
\theoremstyle{definition}

\theoremstyle{remark}
\newtheorem{remark}[theorem]{Remark}


\usepackage[T1]{fontenc}

\icmltitlerunning{Learning Rate Scheduling for Large Model Training}

\begin{document}

\twocolumn[

\icmltitle{The Surprising Agreement Between Convex Optimization Theory and Learning-Rate Scheduling for Large Model Training}



\icmlsetsymbol{equal}{*}

\begin{icmlauthorlist}
\icmlauthor{Fabian Schaipp}{inria}
\icmlauthor{Alexander H{\"a}gele}{epfl}
\icmlauthor{Adrien Taylor}{inria}
\icmlauthor{Umut \c{S}im\c{s}ekli}{inria}
\icmlauthor{Francis Bach}{inria}
\end{icmlauthorlist}

\icmlaffiliation{inria}{Inria, Departement d'Informatique de l'Ecole Normale Superieure, PSL Research University, Paris, France}
\icmlaffiliation{epfl}{EPFL, Lausanne, Switzerland}

\icmlcorrespondingauthor{Fabian Schaipp}{fabian.schaipp@inria.fr}

\icmlkeywords{Optimization, Learning Rate Schedules}

\vskip 0.3in
]



\printAffiliationsAndNotice{}  

\begin{abstract}
We show that learning-rate schedules for large model training behave surprisingly similar to a performance bound from non-smooth convex optimization theory. We provide a bound for the constant schedule with linear cooldown; in particular, the practical benefit of cooldown is reflected in the bound due to the absence of logarithmic terms.
Further, we show that this surprisingly close match between optimization theory and practice can be exploited for learning-rate tuning: we achieve noticeable improvements for training $124$M and $210$M \texttt{Llama}-type models by (i)~extending the schedule for continued training with optimal learning-rate, and (ii)~transferring the optimal learning-rate across schedules.
\end{abstract}
\section{Introduction}
Large-scale machine learning requires a fine-tuned training recipe. 
In particular, the choice of appropriate learning-rate schedules is a crucial step for classical optimization methods. This usually decomposes into the choice of a \emph{schedule}, determining the shape of learning rates over time, and the tuning of a multiplicative \emph{base learning-rate}, determining the magnitude of the step sizes.

Over the years, the \cosine{} schedule \citep{Loshchilov2017} has emerged among the most commonly used schedules in large (language) model training \citep{Brown2020,Touvron2023}.
The standard practice is to set the frequency of the cosine to half of the total number of training steps \citep{Hoffmann2022}; as a consequence, the entire schedule depends on the length of training, which makes it unsuitable for continued training.
Recently, it has been shown that the performance of \cosine{} can be matched by an arguably much simpler schedule, that combines a constant part with a cooldown period in the end \citep{Hu2024,Haegele2024}. This alternative schedule is established under the name \wsd{} (\textit{warmup-stable-decay}). One distinguishing feature of \wsd{} is the drastic decrease of the loss shortly after initiating the cooldown.

However, the recent advancements in learning-rate scheduling have emerged almost exclusively from \emph{empirical} rather than from \emph{theoretical} considerations \citep{Loshchilov2017,Goyal2017,Hoffmann2022,Haegele2024}.
We do not yet have a fundamental understanding that could explain the features of the above-mentioned schedules and why they perform better or worse at a given task, restraining the tuning procedure to a trial-and-error approach.
%
\begin{figure}[t]
    \centering
    \begin{subfigure}{0.49\columnwidth}
    \includegraphics[width=0.99\textwidth]{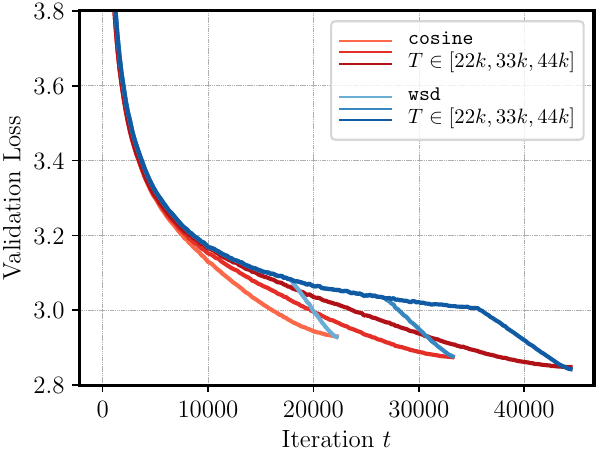}
    \caption{Real Loss Curves}
    \label{fig:wsd-cosine-real}
    \end{subfigure}
    \begin{subfigure}{0.49\columnwidth}
        \includegraphics[trim={0cm 1ex 0 0},clip,width=0.99\textwidth]{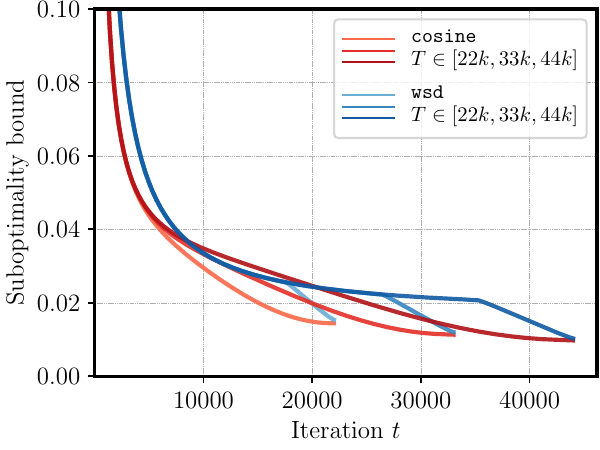}
    \caption{Theoretical Bound}
    \label{fig:wsd-cosine-theory}
    \end{subfigure}
    \caption{Strikingly similar: Validation loss for a 210M \texttt{Llama} model trained with \texttt{AdamW} \textbf{(left)} and the theoretical suboptimality bound \eqref{eqn:bound-schedule-convex} from convex optimization \textbf{(right)}. Both plots show \wsd{} and \cosine{} schedule with different training lengths $T$, and with base learning-rate of \cosine{} being twice as large as for \wsd{}.}
    \label{fig:wsd-cosine-loss}
\end{figure}

\paragraph{Summary and contributions.} 
In this paper, we show that several empirical findings on scheduling can be reproduced with a suboptimality bound for \SGD{} on convex problems that was introduced by \citet{Defazio2023a}. Among others, this theory allows to reproduce 
(i) the matching performance of \cosine{} and \wsd{}; 
(ii)~the necessity and optimal length of the cooldown period in the end of training (see \cref{sec:theoretical-simulations}).

In a second step, we take the reverse direction and show how the theoretical bound can be exploited in practice (see \cref{sec:applications}): for continued training of a $124$M and $210$M \texttt{Llama}-style model, using the theoretically optimal schedule notably improves performance compared to continuing with the same learning rate; it also allows to transfer the optimal learning-rate across schedules (\cref{fig:reanalysis-horizon-transfer,fig:analysis-lr-transfer}). This leads us to the perhaps surprising conclusion that the empirical behavior of learning-rate schedules in (non-convex) deep learning can be described precisely with a theoretical bound from non-smooth stochastic convex optimization.

A particular focus of this paper is put on the \wsd{} schedule: we derive a convergence result for this schedule (without warmup) in the non-smooth stochastic convex setting, see \cref{sec:wsd-bound}. Most importantly, the cooldown period of \wsd{} leads to vanishing log-terms in the bound, which provides an explanation of the benefit of cooldown observed in practice.
Second, we show that the sudden drop during cooldown can be observed in \emph{upper} and \emph{lower} bounds of the suboptimality, as well as for a non-smooth convex toy problem. Code for all experiments is available at \url{https://github.com/fabian-sp/lr-scheduling}.
\paragraph{Setup.} We consider the training problem
\begin{align}
    \min_{x\in \R^d} f(x), \quad f(x) :=  \E_{s}[f(x,s)].
\end{align}
In the above, $x \in \R^d$ are the learnable parameters of a machine learning model, and $f$ is a loss function. The expectation is taken over the distribution of a random variable $s$ that maps to the space or set $\mathcal{S}$ (typically the training set).
We assume that $f(\cdot,s)$ has a suitable subdifferential for every $s\in \mathcal{S}$ (for example, see \citet{Rockafellar1970,Clarke1983,Bolte2021}). We denote elements of the subdifferential as $g\in \partial f(x,s)$.\footnote{In case the reader is uncomfortable with the notion of subdifferentials, the entire article can be read with $g_t$ being the gradient $\nabla f(x_t, s_t)$ instead.}

We study the iterates of \SGD{} with a learning-rate schedule, given by
\begin{align}\label{eqn:update}
    x_{t+1} = x_t - \gamma \eta_t g_t, \quad g_t \in \partial f(x_t, s_t), \quad t\in \N.
\end{align}
Here, $\gamma>0$ is called the \emph{base learning-rate} and $\eta_t>0$ is called the \emph{schedule}.
While it might seem redundant to separate $\gamma$ from $(\eta_t)$, this reflects the standard practice in deep learning libraries such as \texttt{Pytorch}.
Most importantly, for different schedules (constant, \cosine{}, \wsd{},\ldots), the optimal value of $\gamma$ is in general different.

We remark that the most commonly used optimizer for training in practice is \texttt{Adam(W)} \citep{Kingma2015,Loshchilov2019}, and all empirical results we present or refer to in this paper are obtained with \texttt{Adam(W)}. However, the theoretical results apply to \SGD{}; we address this limitation in detail in \cref{sec:limitations}.

\paragraph{\texttt{Cosine} and \wsd{} schedules.}
We now formally introduce the two running examples \cosine{} and \wsd{}.
Without warmup, the \wsd{} schedule is constant up to iteration $T_0 \leq T$, then decays linearly to zero. Formally, we have
\begin{align}\label{eqn:def-wsd}
    \eta_t = \begin{cases}
        1  \quad & 1 \leq t < T_0,\\
        1 - \frac{t-T_0}{T+1-T_0} \quad &T_0 \leq t \leq T+1.
    \end{cases}
\end{align}

The  \cosine{} schedule is given by
$\eta_t = \frac12 (1+\cos(\frac{t-1}{T}\pi))$ for $1\leq t \leq T+1$.
Note that for both schedules we have $\eta_{T+1} = 0$ (we choose $T+1$ in order to ensure that $\eta_t > 0$ for $t\leq T$).
It is also common to decay the \cosine{} to a factor of $0.1$ of the peak learning-rate instead of 0.

\paragraph{Notation and naming convention.}
We will use \wsd{} in the paper as it is the most established abbreviation in the literature; however, similar to \citet{Zhai2022,Haegele2024}, we will refer to the phase where the schedule decays to zero as \emph{cooldown} instead of \emph{decay}, in order to avoid confusion with other terminology (e.g., \textit{weight decay}).
Unless explicitly stated otherwise, $\|\cdot\|$ and $\iprod{\cdot}{\cdot}$ denote the standard Euclidean norm and its scalar product.

\section{Related Work}

\paragraph{Learning-rate schedules.}
The \cosine{} schedule \citep{Loshchilov2017} can be considered the de-facto default in large-scale deep learning. Convergence results for \SGD{} with cosine schedule have been shown by \citet{Li2021}.
Recently, the \wsd{} schedule (short for \textit{warmup-stable-decay}, also called trapezoidal schedule) has been proposed as an alternative \citep{Zhai2022,Hu2024,Haegele2024}.
\citet{Haegele2024} show that \wsd{} matches the performance of \cosine{} on LLM pretraining, while largely reducing the compute needed for scaling-law experiments, as the constant part of the schedule can be reused.

\paragraph{Last-iterate convergence.}
We will see that it is crucial to use a bound for the \emph{last-iterate} in order to closely match empirical loss curves.
This is in contrast to many standard convergence results that prove an upper bound on the quantity $\min_{t=1,\ldots,T} \E[f(x_t) - f(x_\star)]$. Due to convexity and Jensen's inequality, the same bound usually holds for $\E[f(\bar{x}_T) - f(x_\star)]$, where $\bar{x}_T$ is some (weighted) average over $\{x_1,\ldots, x_T\}$. Last-iterate results, that is, bounds on $\E[f(x_T)-f(x_\star)]$, are less standard: convergence of \SGD{} has been proven for constant step sizes \citep{Zhang2004}, and for decreasing step sizes in bounded domains \citep{Shamir2013}. Other results are restricted to a specific choice of schedule \citep{Jain2021,Zamani2023}. The backbone of this article will be a result from \citet{Defazio2023a}, which proves a last-iterate bound for general schedules; compared to previous work \citep{Orabona2020} it has the advantage that the bound remains meaningful if the last step size $\eta_T$ is very small. 

\paragraph{Understanding cooldown.}
For the \wsd{} schedule, one can consistently observe a sudden drop in train/validation loss after the start of the cooldown phase \citep{Haegele2024}. \citet{Hu2024} find that the curvature of the loss increases during cooldown; \citet{Haegele2024} expand this and conclude that ``the cooldown phase is a smooth transition to a basin in the loss landscape''.
More recently, \citet{Wen2024} hypothesize that the sudden drop is caused by a \textit{river-valley loss landscape}, that arises from ``heterogeneity in the stochasticity of different tokens''.
In this work, we will offer an additional (and potentially much simpler) model: the drop of the loss can be observed in upper and lower bounds of the suboptimality, based on first-order convex optimization theory.
In particular, this phenomenon happens for a toy instance of $\ell_\infty$-norm regression.

\section{Convergence Results}
Let us assume convexity of the objective and recall the definition of the iterates.
\begin{enumerate}[label=(A\arabic*)]
    \item\label{item:asum:convexity} For each $s\in \mathcal{S}$ the function $f(\cdot,s): \R^d \to \R$ is convex, that is, for all $x,y \in \R^d$ and $g\in \partial f(x,s)$
    \begin{align}\label{eqn:alpha-beta}
        f(y,s)-f(x,s) \geq \iprod{g}{y-x}.
    \end{align}
\end{enumerate}
\begin{enumerate}[label=(A\arabic*)]
    \setcounter{enumi}{1}
    \item\label{item:asum:base-lr} 
    Let $\gamma > 0$ and $\eta_t > 0$. For $t\in \N$, consider the iterates 
    \begin{align}\label{eqn:update-gamma}
        x_{t+1} = x_t - \gamma\eta_t g_t, \quad g_t \in \partial f(x_t,s_t).
    \end{align}
\end{enumerate}
%
Let $x_\star \in \R^d$ be an arbitrary point of interest, for example the (local) minimum of $f$ that is closest to $x_1$. We do not make any other assumption on $x_\star$ for now.
\begin{theorem}[cf.\ Thm.\ 10 from \citet{Defazio2023a}]\label{thm:convex}
    Let $(x_t)$ be given by \ref{item:asum:base-lr}, with $\eta_t > 0$ for $t=1,\ldots,T$ and $\gamma > 0 $.  Let $x_\star \in \R^d$ and define $D:= \|x_1 - x_\star\|$ and $\bar{\eta}_T:=\sum_{t=1}^T \eta_t$.
    Under \ref{item:asum:convexity}, for any $T\in \N$ it holds
    \begin{align}\label{eqn:bound-schedule-convex}
    \begin{split}
        &\E[f(x_T) - f(x_\star)] \leq \frac{1}{2\gamma\bar{\eta}_T} \big[D^2 + \gamma^2\sum_{t=1}^T \eta_t^2\E\|g_t\|^2\big]  \\
        &+ \frac{\gamma}{2} \sum_{k=1}^{T-1} \frac{\eta_k}{\sum_{t=k+1}^T \eta_t} \Big(\frac{1}{\sum_{t=k}^T \eta_t} \sum_{t=k}^T\eta_t^2 \E\|g_t\|^2\Big). 
         \end{split}
    \end{align}
\end{theorem}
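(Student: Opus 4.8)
The plan is to obtain the last iterate as a telescoping sum of $\eta$-weighted suffix averages, and then to control the two resulting pieces by two applications of the same one-step subgradient inequality, differing only in the reference point. Write $a_t := \E[f(x_t)-f(x_\star)]$ and $\Sigma_k := \sum_{t=k}^T \eta_t$, and define the weighted suffix averages $A_k := \Sigma_k^{-1}\sum_{t=k}^T \eta_t a_t$. Then $A_T = a_T$ is exactly the quantity we want to bound, while $A_1$ is the familiar full weighted average. The elementary telescoping identity $a_T = A_1 + \sum_{k=1}^{T-1}(A_{k+1}-A_k)$ reduces the theorem to an upper bound on $A_1$ and an upper bound on each increment $A_{k+1}-A_k$.

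For the first term I would sum the standard descent inequality. Expanding $\|x_{t+1}-x_\star\|^2$ from \ref{item:asum:base-lr} and using \ref{item:asum:convexity} pathwise gives $2\gamma\eta_t\langle g_t, x_t-x_\star\rangle \le \|x_t-x_\star\|^2 - \|x_{t+1}-x_\star\|^2 + \gamma^2\eta_t^2\|g_t\|^2$; taking expectations and noting that $x_t$ is independent of $s_t$ (so that $\E[f(x_t,s_t)-f(x_\star,s_t)] = a_t$ and hence $\E\langle g_t, x_t-x_\star\rangle \ge a_t$) yields $2\gamma\eta_t a_t \le \E\|x_t-x_\star\|^2 - \E\|x_{t+1}-x_\star\|^2 + \gamma^2\eta_t^2\E\|g_t\|^2$. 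Summing over $t=1,\dots,T$ telescopes the distance terms down to $D^2$, and dividing by $2\gamma\Sigma_1$ gives $A_1 \le \frac{1}{2\gamma\Sigma_1}\big(D^2 + \gamma^2\sum_{t=1}^T\eta_t^2\E\|g_t\|^2\big)$, which is exactly the first line of \eqref{eqn:bound-schedule-convex}.

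The increments require one algebraic step and a second, ``suffix'' descent argument. Using $\Sigma_k = \Sigma_{k+1}+\eta_k$ together with $\sum_{t=k}^T\eta_t a_t = \sum_{t=k+1}^T\eta_t a_t + \eta_k a_k$, a direct computation gives the clean identity $A_{k+1}-A_k = \frac{\eta_k}{\Sigma_k}(A_{k+1}-a_k)$, so it suffices to bound $A_{k+1}-a_k = \Sigma_{k+1}^{-1}\sum_{t=k+1}^T \eta_t\,\E[f(x_t)-f(x_k)]$ from above. I would get this by replaying the descent inequality with reference point $x_k$ instead of $x_\star$: from $\|x_{t+1}-x_k\|^2 = \|x_t-x_k\|^2 - 2\gamma\eta_t\langle g_t, x_t-x_k\rangle + \gamma^2\eta_t^2\|g_t\|^2$ and convexity, summing over $t=k+1,\dots,T$ telescopes to the single surviving term $\|x_{k+1}-x_k\|^2 = \gamma^2\eta_k^2\|g_k\|^2$, which yields $\sum_{t=k+1}^T \eta_t\,\E[f(x_t)-f(x_k)] \le \frac{\gamma}{2}\sum_{t=k}^T\eta_t^2\E\|g_t\|^2$. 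Substituting back and multiplying by $\eta_k/\Sigma_k$ reproduces the second line of \eqref{eqn:bound-schedule-convex} term by term.

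I expect the main obstacle to be not the calculus but the stochastic bookkeeping and the exact index ranges. The reference-point-$x_k$ inequality is only valid in expectation, and for that one needs that $x_t$ and $x_k$ are both independent of $s_t$ for $t\ge k$, so that $\E\langle g_t, x_t-x_k\rangle \ge \E[f(x_t)-f(x_k)]$ after conditioning; and the telescoping of the distance terms must start at $t=k+1$ so that precisely the extra term $\gamma^2\eta_k^2\|g_k\|^2$ remains, which is exactly what promotes the inner sum from $\sum_{t=k+1}^T$ to $\sum_{t=k}^T$. Tracking whether each suffix sum runs from $k$ or from $k+1$ is the delicate point that makes the two lines of the bound line up exactly.
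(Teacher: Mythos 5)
Your proposal is correct and takes essentially the same route as the paper: your telescoping identity $a_T = A_1 + \sum_{k=1}^{T-1}(A_{k+1}-A_k)$ together with $A_{k+1}-A_k = \tfrac{\eta_k}{\Sigma_k}\,(A_{k+1}-a_k) = \tfrac{\eta_k}{\Sigma_{k+1}}\bigl(\tfrac{1}{\Sigma_k}\sum_{t=k}^T \eta_t (a_t-a_k)\bigr)$ is exactly the weighted-average decomposition of \cref{lem:qt-equality} (Lemma~5 of Defazio and Mishchenko), which the paper then combines, just as you do, with two applications of the one-step descent inequality (\cref{lem:basic-ineq}) at the reference points $x_\star$ and $x_k$. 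The only cosmetic differences are that you re-derive that identity inline rather than cite it, and that your suffix telescoping starts at $t=k+1$ with the surviving term $\|x_{k+1}-x_k\|^2=\gamma^2\eta_k^2\|g_k\|^2$, whereas the paper starts at $t=k$ and uses $\|x_k-x_k\|^2=0$; both yield the same inner sum $\sum_{t=k}^T\eta_t^2\,\E\|g_t\|^2$.
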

The above result is essentially the same as \citep[Thm.\ 10]{Defazio2023a}; the only difference is that we explicitly separate $\gamma$ and $(\eta_t)$ which will be convenient subsequently. We refer to \cref{sec:app:proofs} for
a proof.

Our next goal is to compute the base learning-rate $\gamma$, given a schedule $\eta_t$, that minimizes the bound in \eqref{eqn:bound-schedule-convex}. 
To do so, we assume a bound on the expected gradient norms:
\begin{enumerate}[label=(A\arabic*)]
    \setcounter{enumi}{2}
    \item\label{item:asum:lipschitz} 
    Assume that there exists $(G_t)_{t\leq T}>0$ such that $\E\|g_t\|^2 \leq G_t^2$ for all $t\leq T$.
\end{enumerate}
\begin{remark}
    In general, the choice of $\gamma$ will affect the iterates $(x_t)$ and therefore the gradient norm bounds $(G_t)$. Thus, the following Corollary can be applied \emph{only if} we apply the same bound $G_t$ independent of $\gamma$. This is the case for the standard assumption of $f(\cdot,s)$ being Lipschitz with constant $G>0$; in that case, choose $G_t=G$ for all $t\in \N$.
\end{remark}
Let ${\eta_{1:T} := (\eta_1,\ldots,\eta_T)}$, and ${G_{1:T} :=(G_1,\ldots,G_T)}$. For convenience, we define the quantities
\begin{align}\label{eqn:def-T-terms}
\begin{split}
    &\hspace{-3ex}\mathcal{T}_1(\eta_{1:T}, D, T) := \frac{1}{2\sum_{t=1}^T \eta_t} D^2, \\
    &\hspace{-3ex}\mathcal{T}_2(\eta_{1:T}, G_{1:T}, T) := \frac{1}{2\sum_{t=1}^T \eta_t} \big(\sum_{t=1}^T \eta_t^2G_t^2\big) \\
    &\hspace{3ex}+ \frac12 \sum_{k=1}^{T-1} \frac{\eta_k}{\sum_{t=k+1}^T \eta_t} \Big(\frac{1}{\sum_{t=k}^T \eta_t} \sum_{t=k}^T\eta_t^2 G_t^2\Big).
\end{split}
\end{align}
\begin{corollary}\label{cor:optimal-base-lr}  
    In the setting of \cref{thm:convex}, under \ref{item:asum:lipschitz}, for any $T\in \N$ it holds $\E[f(x_T) - f(x_\star)] \leq \Omega_T$ with 
    \begin{align}\label{eqn:convex-lipschitz-bound}
        \Omega_T := \frac{\mathcal{T}_1(\eta_{1:T}, D, T)}{\gamma} + 
        \gamma \mathcal{T}_2(\eta_{1:T}, G_{1:T}, T).
    \end{align}
    For a given $(G_t),D$ and $T$, minimizing the right-hand side of \eqref{eqn:convex-lipschitz-bound} with respect to $\gamma > 0$ gives the solution $\gamma^\star = \sqrt{\frac{\mathcal{T}_1(\eta_{1:T}, D, T)}{\mathcal{T}_2(\eta_{1:T}, G_{1:T}, T)}}$.
    Plugging $\gamma^\star$ back into \eqref{eqn:convex-lipschitz-bound}, we have
    $\E[f(x_T) - f(x_\star)] \leq 2\sqrt{\mathcal{T}_1(\eta_{1:T}, D, T)\mathcal{T}_2(\eta_{1:T}, G_{1:T}, T)}.$
\end{corollary}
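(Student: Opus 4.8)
The plan is to obtain \eqref{eqn:convex-lipschitz-bound} from \cref{thm:convex} by a monotone substitution followed by an elementary one-dimensional minimization over $\gamma$. First I would take the bound \eqref{eqn:bound-schedule-convex} and replace every occurrence of $\E\|g_t\|^2$ by its upper estimate $G_t^2$ from \ref{item:asum:lipschitz}. The one point that genuinely requires checking before this step is that each coefficient multiplying $\E\|g_t\|^2$ on the right-hand side of \eqref{eqn:bound-schedule-convex} is nonnegative, so that the substitution preserves the direction of the inequality; this holds because $\gamma>0$ and all the partial sums $\sum_{t=k}^T \eta_t$ and $\sum_{t=k+1}^T \eta_t$ are strictly positive, as $\eta_t>0$ for $t\le T$.

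After the substitution I would regroup the right-hand side according to powers of $\gamma$. The $D^2$ contribution is exactly $\mathcal{T}_1(\eta_{1:T},D,T)/\gamma$. The remaining two pieces---namely the term $\gamma^2\sum_{t=1}^T\eta_t^2 G_t^2$ divided by $2\gamma\sum_{t=1}^T\eta_t$, together with the double sum---are each linear in $\gamma$: using the identity $\tfrac{\gamma^2}{2\gamma\sum_t\eta_t}=\tfrac{\gamma}{2\sum_t\eta_t}$, their sum equals $\gamma\,\mathcal{T}_2(\eta_{1:T},G_{1:T},T)$, directly by the definition of $\mathcal{T}_2$ in \eqref{eqn:def-T-terms}. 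This establishes \eqref{eqn:convex-lipschitz-bound}.

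For the optimization claim I would minimize the scalar function $h(\gamma):=\mathcal{T}_1/\gamma+\gamma\,\mathcal{T}_2$ over $\gamma>0$, where $\mathcal{T}_1,\mathcal{T}_2>0$. The cleanest route is the AM--GM inequality $h(\gamma)\ge 2\sqrt{\mathcal{T}_1\mathcal{T}_2}$, with equality precisely when $\mathcal{T}_1/\gamma=\gamma\,\mathcal{T}_2$, that is, $\gamma^\star=\sqrt{\mathcal{T}_1/\mathcal{T}_2}$; equivalently one sets $h'(\gamma)=-\mathcal{T}_1/\gamma^2+\mathcal{T}_2=0$ and notes $h''(\gamma)=2\mathcal{T}_1/\gamma^3>0$, so this stationary point is the global minimizer on $(0,\infty)$. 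Substituting $\gamma^\star$ back into \eqref{eqn:convex-lipschitz-bound} then yields the stated bound $\E[f(x_T)-f(x_\star)]\le 2\sqrt{\mathcal{T}_1\mathcal{T}_2}$.

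There is no substantial obstacle here: the corollary is essentially a regrouping in powers of $\gamma$ plus an elementary convex minimization. The only subtlety is conceptual rather than technical, namely that the bound $\E\|g_t\|^2\le G_t^2$ may be inserted uniformly in $\gamma$ only when $G_t$ does not itself depend on $\gamma$---exactly the caveat flagged in the preceding Remark, which is satisfied under a global Lipschitz assumption with $G_t\equiv G$.
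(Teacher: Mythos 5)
Your proposal is correct and matches the paper's (implicit) argument: the paper states \cref{cor:optimal-base-lr} without a separate proof precisely because it follows from \cref{thm:convex} by the steps you give---bounding $\E\|g_t\|^2$ by $G_t^2$ (valid since all coefficients are positive because $\gamma>0$ and $\eta_t>0$), regrouping in powers of $\gamma$ to recognize $\mathcal{T}_1/\gamma + \gamma\,\mathcal{T}_2$, and minimizing over $\gamma>0$ via AM--GM. Your added care about the nonnegativity of the coefficients and the $\gamma$-independence of $G_t$ (the paper's preceding Remark) is exactly the right bookkeeping; nothing further is needed.
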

Next, we plug in the \cosine{} and \wsd{} schedule into \cref{thm:convex}.
Applying\footnote{
For \cref{cor:optimal-base-lr} we require $\eta_t > 0$ for $t=1,\ldots,T$, which is why we construct the schedule such that $\eta_{T+1}=0$ instead of $\eta_T = 0$.
}
\cref{cor:optimal-base-lr} with $T\to t$, we get $\E[f(x_t) - f(x_\star)] \leq \Omega_t$  for $t\in [T]$ with 
\begin{align}\label{eqn:def-omega}
\Omega_t := \frac{\mathcal{T}_1(\eta_{1:t}, D, t)}{\gamma} + \gamma \mathcal{T}_2(\eta_{1:t}, G_{1:t}, t).
\end{align}
\subsection{Comparison of \cosine{} and \wsd{}}
\label{sec:multiple-horizon}
For a training horizon $T\in \N$, we define both schedules $(\eta_t)_{1\leq t \leq T+1}$ such that they reach $\eta_{T+1}=0$. For a formal definition of \wsd{} and \cosine{} see \eqref{eqn:def-wsd} and thereafter. For each different training horizon $T$, and for both schedulers, we pick the optimal base learning-rate $\gamma^\star$ given by \cref{cor:optimal-base-lr} and plot the bound $\Omega_t$ in \cref{fig:multiple-horizon-1} (with $G_t=D=1$ for all $t\in \N$). We plot a sweep of $\gamma$ in \cref{fig:multiple-horizon-sweep}.

Perhaps surprisingly, the shape of the theoretical bound $\Omega_t$ (for the convex case) matches closely the empirical loss curves of (the non-convex problem of) language model pretraining in \citet{Haegele2024}; see \cref{fig:wsd-cosine-loss} for a side-by-side comparison.
This is especially visible in the sudden drop of the loss for the \wsd{} schedule during cooldown.
However, using the last-iterate result is crucial for this: we demonstrate this with an ablation study that uses a standard bound on the minimum suboptimality instead; there, the theoretical bound does not resemble empirical loss curves (cf.\ \cref{sec:ablation-min}).
\begin{figure}
    \centering
    \includegraphics[width=0.99\linewidth]{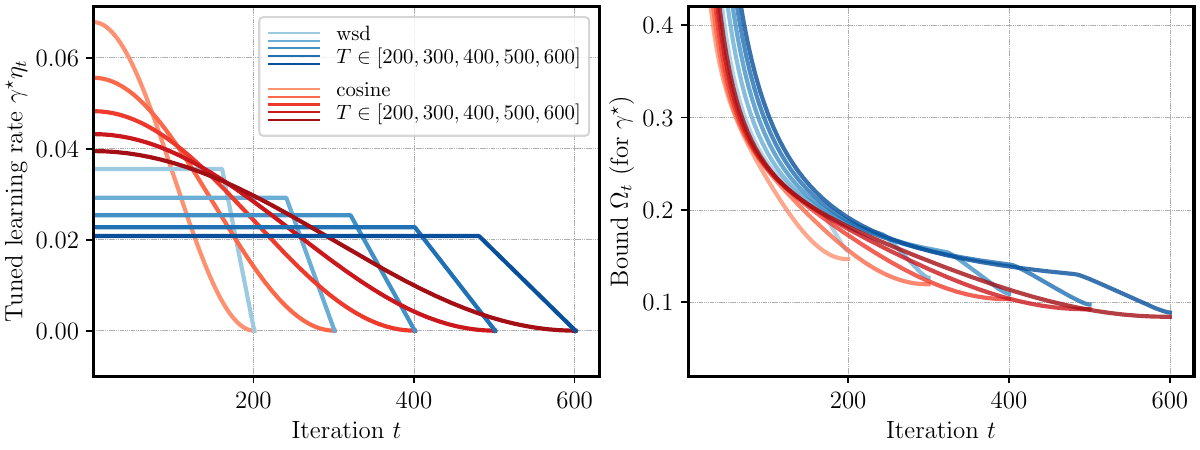}
    \caption{Schedule \textbf{(left)} and theoretical bound \textbf{(right)} for \cosine{} and \wsd{}, and various $T$, with base learning-rate $\gamma^\star$.}
    \label{fig:multiple-horizon-1}
\end{figure}
\takeaway{The last-iterate bound in \cref{cor:optimal-base-lr} matches the shape of the loss curves in \citet{Haegele2024}. In particular, the sudden drop for \wsd{} during cooldown can be observed.}
\takeaway{The optimal base learning-rate from \cref{cor:optimal-base-lr} scales $1/\sqrt{T}$ with the training horizon, and is roughly twice as large for \cosine{} as for \wsd{} (\cref{fig:multiple-horizon-sweep-fit}). This matches empirical observations (Fig.\ 4 in \citet{Shen2024} and Fig.\ 3 in \citet{Haegele2024}).}
%
\begin{figure}
    \centering
    \begin{subfigure}[b]{0.49\columnwidth}
    \includegraphics[width=0.99\textwidth]{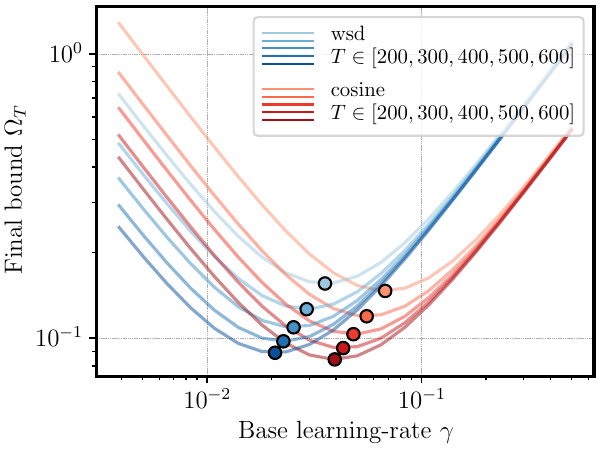}
    \caption{Learning-rate sweep}
    \label{fig:multiple-horizon-sweep}
    \end{subfigure}
    \begin{subfigure}[b]{0.49\columnwidth}
        \includegraphics[width=0.99\textwidth]{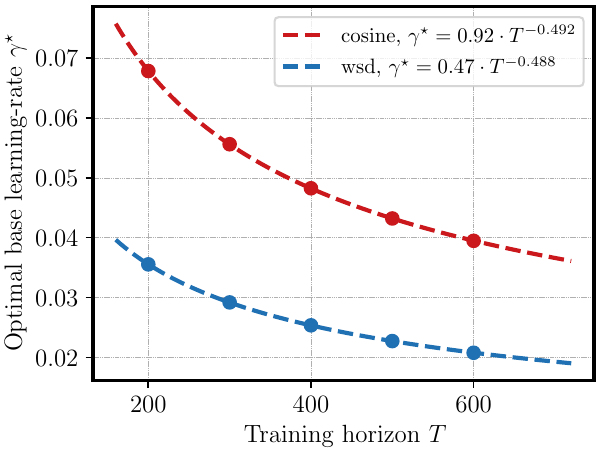}
    \caption{Optimal base learning-rate}
    \label{fig:multiple-horizon-sweep-fit}
    \end{subfigure}
    \caption{Optimal base learning-rate decays with inverse square-root of training horizon~$T$ \textbf{(right)}. For \cosine{}, it is roughly twice as large as for \wsd{} (as $0.92/0.47 \approx 2$). The dashed curve in the right-hand side plot is obtained with a least-squares fit.}
    \label{fig:multiple-horizon-2}
\end{figure}
\subsection{Bound for \wsd{} Schedule}\label{sec:wsd-bound}
We now derive the bound in \cref{cor:optimal-base-lr} for $(\eta_t)$ being the \wsd{} schedule.
To the best of our knowledge, this schedule has not been analyzed theoretically before.
For this section, assume that $G_t=G>0$ for all $t\in \N$.
A useful notation will be the \emph{harmonic number} $H_t$, defined as $H_t:= \sum_{k=1}^t \frac{1}{k}$ for $t\in \N$, and $H_0:=0$. We recall that $H_t$ behaves like $\ln(t)$ in the limit. As baseline, we first compute the bound for the constant schedule.
\paragraph{Constant schedule.}
If $\eta_t=1$ for all $t\in \N$, it is easy to compute $\mathcal{T}_1(\eta_{1:T}, D, T) = \frac{D^2}{2T}, ~ 
    \mathcal{T}_2(\eta_{1:T}, G_{1:T}, T) = \frac{G^2}{2}[1+H_{T-1}].
$
Therefore, \cref{cor:optimal-base-lr} yields
\begin{align*}
    \E[f(x_T) - f(x_\star)] \leq  \frac{DG}{\sqrt{T}}\sqrt{1+H_{T-1}}.
\end{align*}
\paragraph{The \wsd{} schedule.}
We will now compute a suboptimality bound for the \wsd{} schedule (without warmup). We will show that if higher-order terms are ignored, the improvement of \wsd{} over a constant schedule is essentially due to the absence of the logarithmic term $H_{T-1}$.
In \cref{thm:wsd-bound-shortened}, we surpress some terms due to space constraints. The full version is given in \cref{thm:wsd-bound}.
\begin{theorem}\label{thm:wsd-bound-shortened}
    Assume that \ref{item:asum:lipschitz} holds with $G_t=G$ for some $G>0$ for all $t\in \N$.
    Let $\gamma=\gamma^\star$ from \cref{cor:optimal-base-lr}.
    Then, for the \wsd{} schedule \eqref{eqn:def-wsd} with $1 \leq T_0 < T$ we get
    \begin{align*}
        &\hspace{0ex}\E[f(x_T)-f(x_\star)] \leq DG\sqrt{\tfrac{4}{T+T_0}\big[\Lambda_1  + \Lambda_2 - \Lambda_3 + o(1)\big]}
    \end{align*}
    where $\Lambda_1:=\tfrac{2}{3} +\tfrac{T+2T_0}{3(T+T_0)}$, $\Lambda_2:=H_{T+T_0-2} - H_{T-T_0+1}$, $\Lambda_3:=\tfrac{(T-T_0)(T_0-1)}{3(T-T_0+2)(T+T_0)}$
    and $o(1)$ summarizes terms that go to zero as $T\to+\infty$.
\end{theorem}
Assume that the cooldown length is proportional to $T$, that is, $T_0=\beta T$ for $\beta \in (0,1)$. For large $T$, we have $\Lambda_3 \approx \frac{(1-\beta)\beta T^2}{3(1-\beta)(1+\beta)T^2} = \frac{\beta}{3(1+\beta)}.$
Using \cref{len:bound-harmonic-number}, we can estimate $H_{(1+\beta)T-2} \leq 1+\ln((1+\beta)T)$ and $H_{(1-\beta)T+1} \geq \ln((1-\beta)T)$. This yields $\Lambda_2 \leq 1+\ln(\tfrac{1+\beta}{1-\beta}).$
Altogether,
\begin{align*}
    \E[f(x_T)-f(x_\star)] \precapprox \frac{DG}{\sqrt{T}} \sqrt{\tfrac{4}{1+\beta}\big[\tfrac{5}{3} + \Lambda_4 +\ln(\tfrac{1+\beta}{1-\beta})\big]},
\end{align*}
where $\Lambda_4:=\tfrac{1+2\beta}{3(1+\beta)} - \tfrac{\beta}{3(1+\beta)}$.
In total, the term in the square-root \textbf{does not contain logarithmic terms in $T$}. This is the main difference to the constant schedule (where in the square-root we have $1+H_{T-1} \approx 1+\ln(T)$).
See \cref{fig:wsd-detail} for a visualization. We defer additional remarks and the proof of \cref{thm:wsd-bound-shortened} to \cref{sec:app:proof-wsd-bound}.
\takeaway{The \wsd{} schedule improves over the constant schedule by a logarithmic term. This improvement in the bound happens during the cooldown period (cf.\ \cref{fig:multiple-horizon-1,fig:wsd-detail}).}

\section{Theoretical Simulations}\label{sec:theoretical-simulations}
In the following, we simulate the bound from \cref{thm:convex} in order to analyze its dependence on the cooldown length for \wsd{}, and on the gradient norm magnitude. Additional experiments (e.g., on the \cosine{} cycle length, and a comparison of classical schedules) and supplementary information are deferred to
\cref{sec:app:experiments-supplementary}.
Unless explicitly mentioned, we set $G_t=1$ for all $t\in N$ and $D=1$ for the entire simulation. We do not use warmup for neither schedule.

\subsection{Cooldown Length}
\label{sec:cooldown-length}

Previously, we have set $T_0 = 0.8 \cdot T$ for \wsd{}. In \cref{fig:cooldown-length-1,fig:cooldown-length-2}, we vary the \emph{cooldown fraction}, defined as $\tfrac{T-T_0}{T}$.
Specifically, we vary from $T_0=T$ to $T_0=1$ (constant schedule to linear-decay schedule similar to \citet[Corollary 2]{Defazio2023a}).

\begin{figure}
    \centering
    \begin{subfigure}[b]{0.49\columnwidth}
    \includegraphics[width=0.99\textwidth]{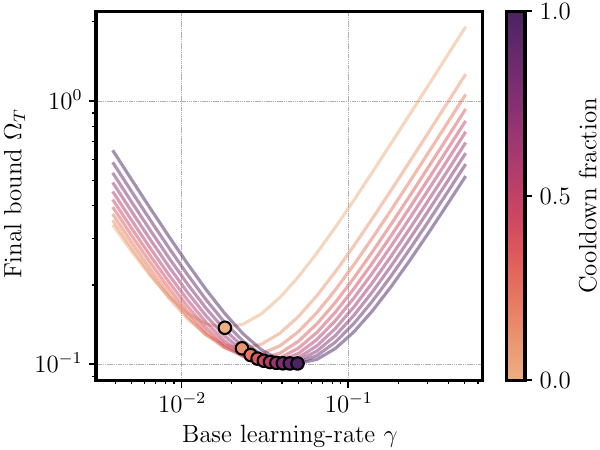}
    \end{subfigure}
    \begin{subfigure}[b]{0.49\columnwidth}
    \includegraphics[width=0.99\textwidth]{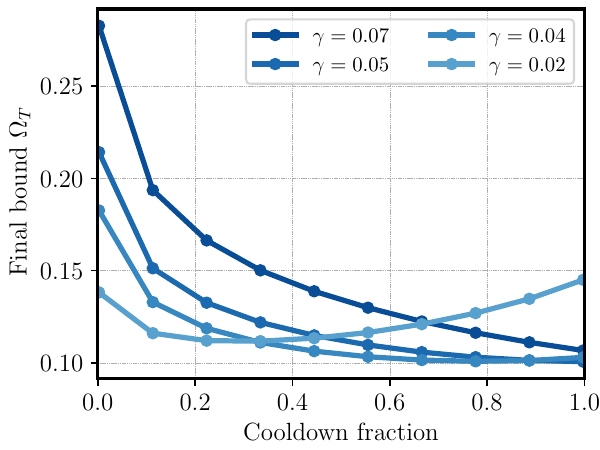}
    \end{subfigure}
    \caption{\textbf{(Left)} Optimal base learning-rate increases with cooldown fraction. \textbf{(Right)} For fixed $\gamma$, the optimal cooldown fraction can be smaller than $1$. The analogous curves for real experiments with similar parabola shapes are in \cref{fig:reanalysis-cooldown-length}.}
    \label{fig:cooldown-length-1}
\end{figure}
\begin{figure}
    \centering
    \includegraphics[width=0.99\linewidth]{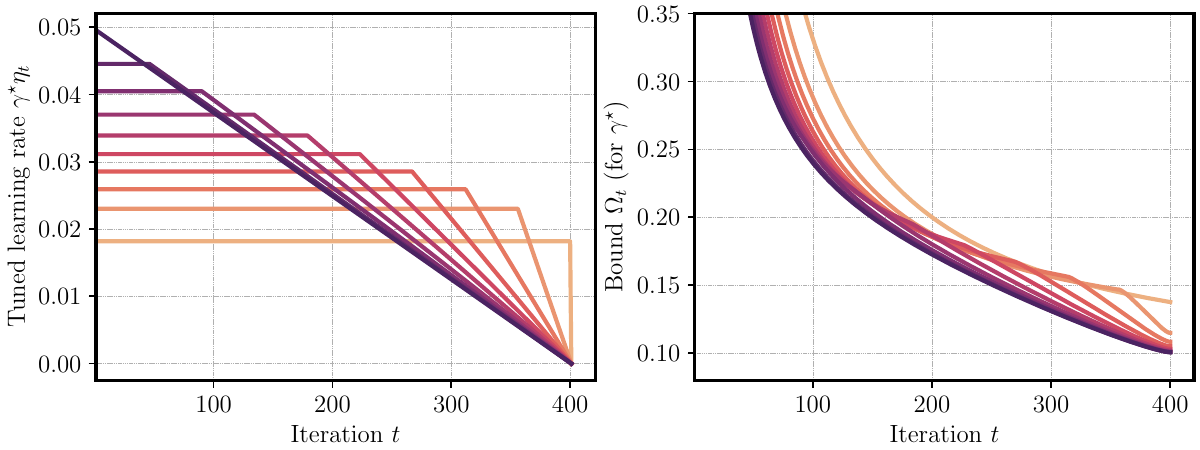}
    \caption{Schedule \textbf{(left)} and theoretical convergence \textbf{(right)} for varying cooldown fraction. With optimal base learning-rate $\gamma^\star$, starting the cooldown at $T_0=1$ is optimal. \cref{fig:reanalysis-cooldown-length} shows the analogous plot for real experiments with the same behavior.}
    \label{fig:cooldown-length-2}
\end{figure}

\takeaway{The simulation suggests that \emph{if} the base learning-rate $\gamma$ is fully tuned, then the optimal cooldown fraction is $1$ (\textit{linear decay}). For fixed $\gamma$, the optimal cooldown fraction can be smaller than one.}

The first observation is in line with empirical observations from \citet{Defazio2023a} that compares many different schedules across several machine learning tasks, and find that the linear-decay schedule performs best on average.
Further, it is known that the linear-decay schedule matches the exact lower-bound convergence bound for the (stochastic) convex, Lipschitz case \citep{Defazio2023a,Zamani2023}; see \cref{sec:app:proof-wsd-bound} for detailed comments.
The second observation matches the finding of \citet{Haegele2024}: 
in \cref{sec:app:misc-plots},~\cref{fig:reanalysis-cooldown-length} we show the analogous figure on real training data (also see Fig.\ 5 in \citet{Haegele2024}). For small base learning-rate $\gamma$, we obtain the same parabola shape; however, for large enough $\gamma$, the parabola turns into a monotonically decreasing curve.

\subsection{Gradient Norm}
\label{sec:gradient-norm-shape}

We now analyze how the bound of the expected gradient norms $G_{1:T}$ influences the shape of $\Omega_t$. In this section only, we assume that
    $G_t = t^\alpha, ~ \alpha \in \{0, -0.5, -1\}.$
%
We sweep the base learning-rate $\gamma$ by computing the minimal $\Omega_T$ from \eqref{eqn:def-omega} for the above choice of $G_{1:T}=(G_1,\dots,G_T)$.
We set $T=400$, and the cooldown fraction to $0.2$ for \wsd{}.
\cref{fig:grad-norm-shape} shows that the sudden drop in loss for \wsd{} is only visible if $G_t$ does not go to zero as $t\to \infty$.
\begin{figure}[t]
    \centering
    \includegraphics[width=0.99\columnwidth]{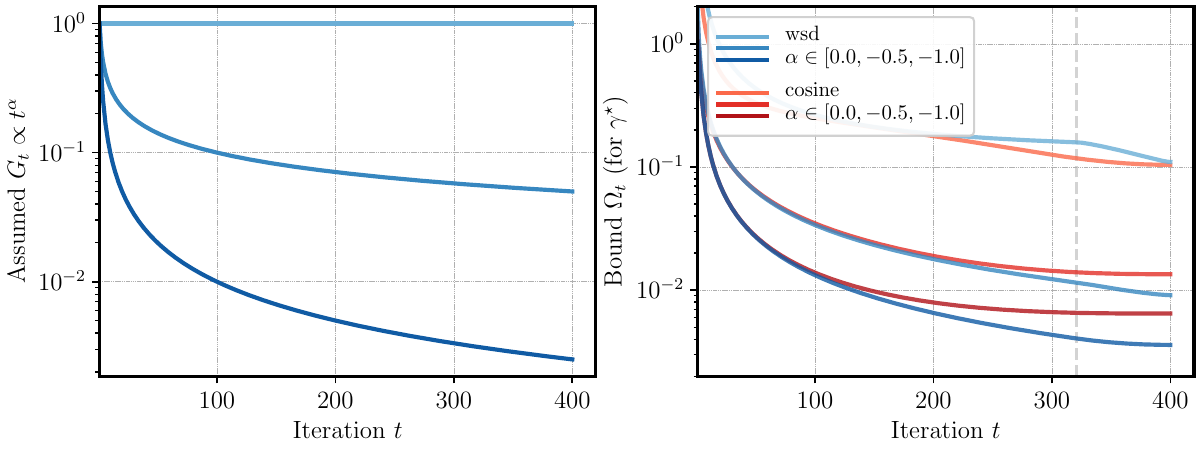}
    \caption{Assumed gradient shape \textbf{(left)} and theoretical converegnce \textbf{(right)}. Only with $\alpha=0$ (constant $G_t$), the sudden drop for \wsd{} is clearly visible.}
    \label{fig:grad-norm-shape}
\end{figure}
\takeaway{The sudden drop during cooldown is most pronounced if the gradient norms do not go to zero.}
Interestingly, if the gradient norms go to zero, the \wsd{} schedule also obtains a significantly better bound than \cosine{}. So far we have observed that non-vanishing gradient norms lead to the characteristic drop in the upper bound $\Omega_t$. Next, we show that the same phenomenon can be observed for (i)~a suboptimality lower bound and (ii)~for the loss of the iterates of \SGD{} on a simple non-smooth convex problem.
\subsection{Lower Bounds and Convexity}\label{sec:lower-bound}
In all previous sections we analyzed an upper bound $\Omega_t$ of $\E[f(x_t)-f(x_\star)]$. How tight is this upper bound? To answer this, we compute lower bounds of $\E[f(x_t)-f(x_\star)]$ using the 
PEP framework: for a given function class and algorithm, a worst-case example can be constructed by solving a semidefinite program \citep{Drori2014,Taylor2017,Taylor2017a,Goujaud2024}. Additional details are provided in \cref{app:sec:lower-bound-details}.
\begin{figure}[b]
    \centering
    \begin{subfigure}{0.49\columnwidth}
    \includegraphics[width=0.99\textwidth]{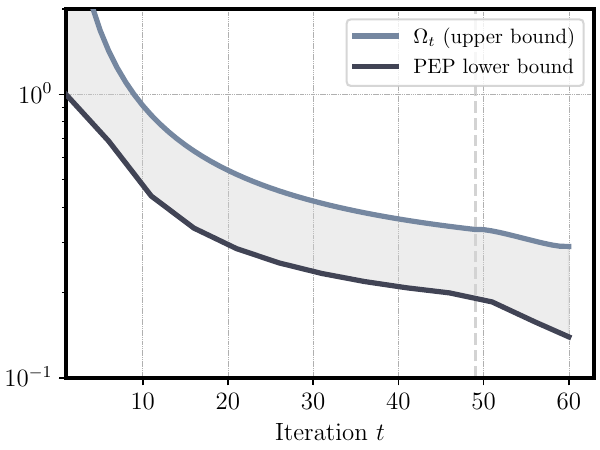}
    \caption{\wsd{} with $20$\% cooldown}
    \label{fig:lower-bound-wsd}
    \end{subfigure}
    \begin{subfigure}{0.49\columnwidth}
        \includegraphics[width=0.99\textwidth]{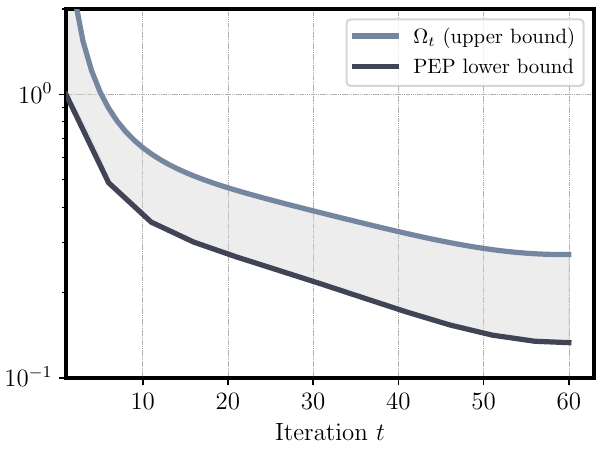}
    \caption{\cosine{}}
    \label{fig:lower-bound-cosine}
    \end{subfigure}
    \caption{PEP lower bound matches the upper bound $\Omega_t$ in shape.}
    \label{fig:lower-bound}
\end{figure}
\takeaway{The sudden drop during cooldown appears as well for the PEP lower bound (\cref{fig:lower-bound}). The worst-case suboptimality value at $T=60$ is very similar for \cosine{} and \wsd{}.}
In \cref{sec:gradient-norm-shape}, we have shown that non-vanishing gradient norms are characteristic for the sudden drop (of the upper bound $\Omega_t$) during cooldown of \wsd{}.
We observe the same behavior for the \emph{actual loss} when running gradient descent with the \wsd{} schedule for the 2-dimensional, convex, non-smooth problem $\min_{x \in \R^2} \|Ax-b\|_{\infty}$. The experimental details and plots are deferred to \cref{sec:app:convex-example}.
\takeaway{The sudden drop of the loss for \wsd{} is not specific to model architecture or even to non-convexity. It can be observed when minimizing a simple non-smooth, convex objective function (\cref{fig:convex-example}).}

\section{Applications}\label{sec:applications}
So far, we have shown that the bound from \cref{thm:convex} matches very closely empirical loss curves. However, the bound in \cref{thm:convex} contains quantities that are unknown in practice, such as the gradient norm bounds $G_t$ and $D$. Thus, the question arises how to convert the theoretical result into practical applications. The following two scenarios demonstrate that using the optimal schedule and base learning-rate predicted from theory improves pretraining of a $124$/$210$M \texttt{Llama}-style transformer \citep{Vaswani2017,Touvron2023}.

\subsection{Schedule Construction for Continued Training}
\label{sec:continued-training}
The first application is to construct learning-rate schedules for longer horizons: for example, assume we have trained a model for $T_1$ steps, but later want to continue training up to $T_2 > T_1$ steps.
The main benefit of the \wsd{} schedule is that the training steps up to the cooldown phase can be reused, thus reducing the amount of additional compute required for continual training \citep{Haegele2024}.
This is not true neither for the linear-decay nor for the \cosine{} schedule, as the value of the schedule in each step depends on the total  number of steps.

Assume we have tuned the base learning-rate $\gamma^\star$ of \wsd{} for the short run $T_1$. We have seen in \cref{fig:multiple-horizon-sweep-fit} that $\gamma^\star$ decreases with $T$; thus, continuing training at $\gamma^\star$ until $T_2$ would use a suboptimal learning rate.
We present two solutions:
\begin{enumerate}[label=(B\arabic*)]
    \item \label{item:cooldown-extension} We have seen that $\gamma^\star$ \textit{increases} with the cooldown fraction $c$ (\cref{sec:cooldown-length}). We can increase the cooldown fraction $c_1$ for the long training run to $T_2$, to compensate for the decrease in $\gamma^\star$ due to $T_1\to T_2$.
    \item \label{item:lr-decrease} Alternatively, we can keep the same cooldown fraction, but decrease the learning-rate in the steps from $T_1$ to $T_2$: assume a piecewise constant schedule with $\eta_t = 1$ for $t$ up to the start of cooldown of the short run, and $\eta_t = \rho$ for $t$ up to the start of cooldown of the long run. How do we need to set $\rho$, such that $\gamma^\star$ remains the optimal base-learning rate for this schedule?
\end{enumerate}
\paragraph{Simulation.} We simulate both options in \cref{fig:horizon-transfer-i,fig:horizon-transfer-ii-wsd}. Here, we set $T_1=4000$ and $T_2$ ranging from $1.5T_1$ to $4T_1$. We construct the extended schedule by sweeping $c_1$ for \ref{item:cooldown-extension} and $\rho$ for \ref{item:lr-decrease}, and picking the value where the optimal base learning-rate according to \cref{cor:optimal-base-lr} is closest to $\gamma^\star$. The cooldown phase of the short run is set to $20$\%. Specifically, our analysis suggests to decrease the schedule by $\rho=0.525$ for $T_2=2T_1$ and by $\rho=0.375$ for $T_2=4T_1$ (see \cref{fig:misc-plots-lr-decrease}, left). We verified that changing the values of $G$, $D$, or $T_1$ do not affect the result (plots not shown); the values might be different for other cooldown fractions than $20$\%.

For \ref{item:lr-decrease} (\cref{fig:horizon-transfer-i}), we conclude that by decreasing the schedule by the correct factor $\rho$, we can reuse the entire constant part of the short run, while obtaining a bound $\Omega_t$ close to the
bound for a tuned linear-decay schedule. Importantly, keeping the same base learning-rate for the entire long run would result in a significantly worse bound $\Omega_t$.  
\begin{figure}[t]
    \centering
    \includegraphics[width=0.99\columnwidth]{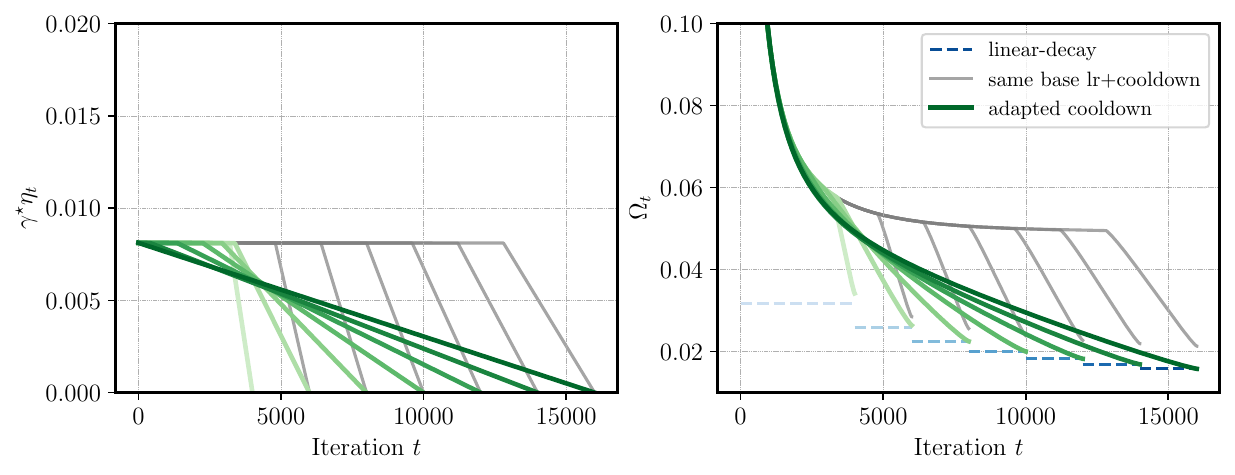}
    \caption{\textbf{(Left)} Transferring the \wsd{} schedule from horizon $T_1=4000$ to $T_2\in [1.5T_1,4T_1]$. 
    \textbf{(Right)} Not adapting the cooldown length leads to significant suboptimality.
    Dashed horizontal lines mark bound for the linear-decay schedule with tuned $\gamma^\star$.}
    \label{fig:horizon-transfer-ii-wsd}
\end{figure}
For \ref{item:cooldown-extension} (\cref{fig:horizon-transfer-ii-wsd}), the required increase in cooldown fraction is large, and hence for long extensions, only small parts of training can be reused. When doubling the training length ($T_2=2T_1$), the adapted cooldown fraction is roughly $c_1=0.6$. As an alternative, one could use the \invsqrt{} schedule, defined by $\eta_t:=1/\sqrt{t}$, combined with cooldown \citep{Zhai2022}. \cref{fig:horizon-transfer-ii-sqrt} shows that for \invsqrt{} the cooldown fraction can roughly stay the same, which however comes at the cost of a larger gap to linear-decay.
From a theoretical and practical perspective, we conclude that the approach \ref{item:lr-decrease} is preferable, as it allows to reuse the entire short run with no drawbacks in terms of the bound, and -- in a similar fashion as before -- allows iteratively continuing from the newly obtained checkpoints of a constant learning-rate phase.
\paragraph{Experiments.} Based on the above, we extend the training of a $124$M and $210$M \texttt{Llama}-style transformer \citep{Touvron2023} on the \texttt{SlimPajama} dataset \citep{Soboleva2023}. For details on model, dataset and training procedure see \cref{sec:app:training-details}. We set $T_1=50$k and $T_2\in \{100\text{k}, 200\text{k}\}$; a sweep over $50$k steps gives $\gamma^\star_{50\text{k}} \approx 0.001$. As a baseline, we use a \wsd{} schedule that continues with the same $\gamma$ over the extended training length. 
For the adapted schedule from \ref{item:lr-decrease}, 
we decrease the learning-rate after $40$k steps linearly over $1000$ steps (e.g.,\ from $10^{-3}$ to $5.25\cdot 10^{-4}$) as a precautionary measure; however, we did not observe that a decrease in-one-go results in significantly different performance. We use a cooldown of $20$\% for all runs. 

Considering the results in \cref{fig:reanalysis-horizon-transfer}, we conclude that the schedule adaptation suggested by theory leads to a slight but noticeable improvement in validation loss for both extended horizons. The improvement is more pronounced for the larger $210$M model.
Moreover, we observe a sudden drop in loss after decreasing the schedule at $40$k steps, analogous to what the theoretical bound predicts, albeit the loss decrease thereafter is slower than expected (cf.\ \cref{fig:horizon-transfer-i}). We also test adapting the cooldown length as described in \ref{item:cooldown-extension}: for a total length of $100$k steps, if cooldown is initialized immediately after $40$k steps (cooldown 60\%), we observe even larger improvements as previously (see \cref{fig:misc-plots-adapt-cooldown}).
%

From \cref{fig:reanalysis-horizon-transfer,fig:misc-plots-adapt-cooldown}, we see that the improvement in loss of the adapted \wsd{} schedule over a naive continuation is in the range of $0.01$. This raises the natural question of the relevance of such an improvement. To answer this, we estimate the slope of our loss curves\footnote{We do linear regression on the loss values of the baseline run between $[64\,000, 84\,000]$ for the $100$k run, and between $[144\,000, 164\,000]$ for the $200$k run. This accounts proportionally for the cooldown.}: we find that for $T_2=100$k, a decrease of $0.01$ takes roughly $6$k steps in the constant learning-rate phase; for $T_2=200$k, it takes roughly $14.5$k steps. This translates to $0.6$B and $1.5$B tokens, respectively. Notably, to match the adapted \wsd{} schedule, this would require a substantial amount of $6$\% and $7.25$\% longer training. Another way to reason about the significance of the loss improvement is through the use of \emph{scaling laws}, which leads to very similar estimates (see \cref{sec:app:training-details}).
\begin{figure}[t]
    \centering
    \includegraphics[width=0.99\columnwidth]{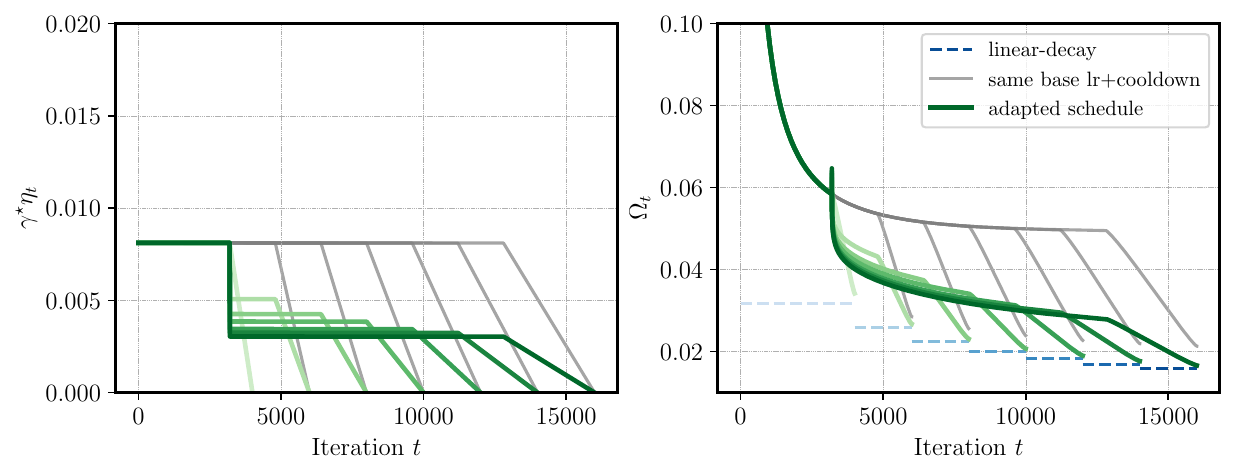}
    \caption{Transfering the learning-rate schedule from horizon $T_1=4000$ to $T_2\in [1.5T_1,4T_1]$ (see also \cref{fig:misc-plots-lr-decrease}, left). 
    Decreasing the learning rate \textbf{(green)} after the short run (at iteration $3200$) leads to significant better bound $\Omega_t$ as keeping it constant \textbf{(grey)}.
    Dashed horizontal lines \textbf{(blue)} mark bounds for linear-decay schedule with tuned $\gamma^\star$.}
    \label{fig:horizon-transfer-i}
\end{figure}
%
%
\begin{figure}[t]
    \centering
    \begin{subfigure}[b]{0.49\columnwidth}
    \includegraphics[width=0.99\textwidth]{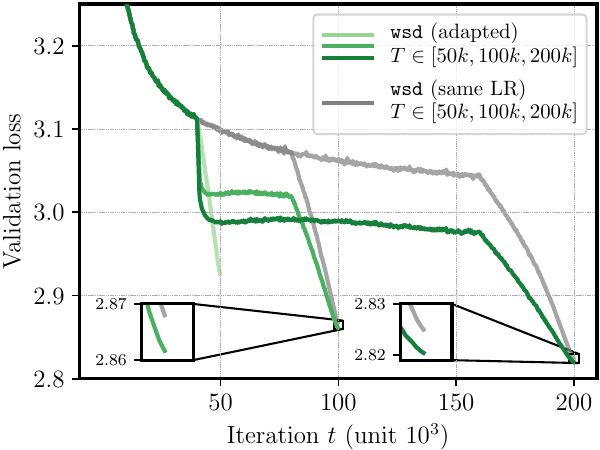}
    \caption{$124$M model}
    \end{subfigure}
    \begin{subfigure}[b]{0.49\columnwidth}
    \includegraphics[width=0.99\textwidth]{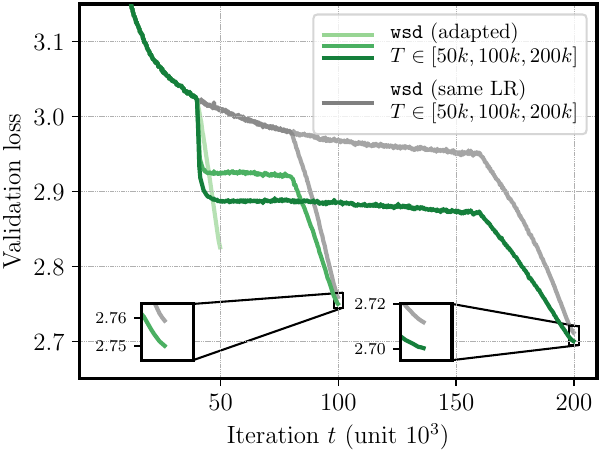}
    \caption{$210$M model}
    \end{subfigure}
    \caption{Transferring the learning-rate schedule from horizon $T_1=50\,000$ to $T_2\in [2T_1,4T_1]$. Decreasing the base learning-rate \textbf{(green)} after $40$k steps leads to small improvements in validation loss compared to keeping it the same \textbf{(grey)}. We discuss the significance of the difference in loss values of (around $0.01$) in \cref{sec:continued-training} and \cref{sec:app:training-details}. See \cref{fig:misc-plots-lr-decrease} for schedules.}
    \label{fig:reanalysis-horizon-transfer}
\end{figure}

\subsection{Learning-Rate Transfer Across Schedules}
\label{sec:lr_transfer}
One insight from \cref{cor:optimal-base-lr} is that if $G_t=G$, then the dependence of $\gamma^\star$ on $G$ and $D$ is multiplicative. This implies that if we know $\gamma^\star$ for a given practical problem, any multiplicative transfer can be realized.
For example, assume we know the optimal base learning-rate for the \wsd{} schedule with cooldown fraction $c \in [0,1]$, and let us denote the tuned value as $\gamma^\star(c)$.
As we have seen, the linear-decay schedule ($c=1$) attains the optimal bound; thus, to obtain a better model, we might want to retrain with the linear-decay schedule.\footnote{For example, assume that $\gamma^\star(c)$ has been made public on \texttt{Github} or we obtained it from a sweep that used the \wsd{} schedule due to practical constraints.}
However, we do not yet know $\gamma^\star(1)$.
Can we compute $\gamma^\star(1)$ from $\gamma^\star(c)$ based on the theoretical bound?

\paragraph{Simulation.}  In \cref{fig:lr-transfer} we show the quantity $\ln(\frac{\gamma^\star(1)}{\gamma^\star(c)})$ for $c\in (0,1)$. 
We simulate both the linear cooldown \eqref{eqn:def-wsd}, and the \texttt{1-sqrt} cooldown which has the form $\eta_t = 1-\sqrt{\frac{t-T_0}{T+1-T_0}}$ \citep{Haegele2024}.
Across several orders of $T$, the results are consistent; for example, knowing $\gamma^\star$ for 20 \% of linear cooldown, we can compute $\gamma^\star(1) \approx e^{0.7}\gamma^\star(0.2)$.
For \cref{fig:lr-transfer}, we set $G=D=1$; the resulting curve looks the same if we vary $D$ or $G$ (plots not shown).
%
%
%
%

\paragraph{Experiments.} We now analyze the quantity $\ln(\frac{\gamma^\star(1)}{\gamma^\star(c)})$ with real data (training a $124$M \texttt{Llama}-style model for $50$k steps), with linear cooldown. We estimate $\gamma^\star(c)$ from a grid of base learning-rates $\gamma$ and cooldown fractions $c$ (see \cref{fig:analysis-lr-transfer-sweep,sec:app:training-details} for details on this step). We plot $\ln(\frac{\gamma^\star(1)}{\gamma^\star(c)})$ in \cref{fig:analysis-lr-transfer-best}; it matches almost perfectly with the curve predicted from theory in \cref{fig:lr-transfer}.
This implies that knowing the optimal base learning-rate for $20$\% cooldown, one can immediately transfer the learning-rate to linear-decay ($100$\% cooldown), \emph{without any additional sweeps}; for the setup we consider, the linear-decay run obtains a final validation loss of $2.9535$ vs.\ the best run with $20$\% cooldown obtaining a final loss of $2.9660$. 
\begin{figure}[t]
    \centering
    \begin{subfigure}[b]{0.49\columnwidth}
    \includegraphics[width=0.99\textwidth]{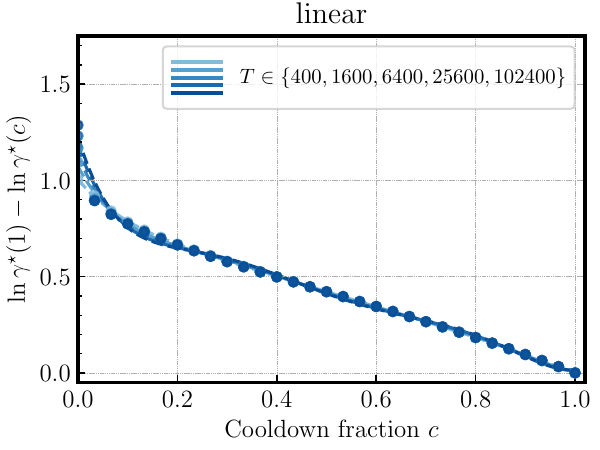}
    \end{subfigure}
    \begin{subfigure}[b]{0.49\columnwidth}
        \includegraphics[width=0.99\textwidth]{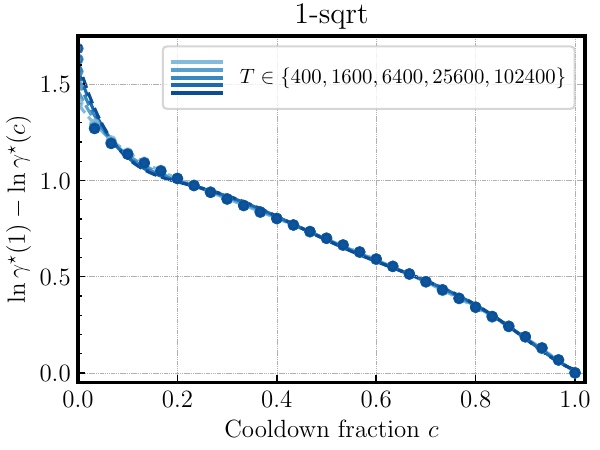}
    \end{subfigure}
    \caption{Transferring the optimal base learning-rate from cooldown fraction $c$ to linear-decay ($c=1$): for linear cooldown \textbf{(left)} and \texttt{1-sqrt} cooldown \textbf{(right)}. Dashed lines are fitted polynomial of degree $6$.}
    \label{fig:lr-transfer}
\end{figure}
\begin{figure}
    \centering
    \begin{subfigure}[b]{0.50\columnwidth}
        \includegraphics[width=0.99\textwidth]{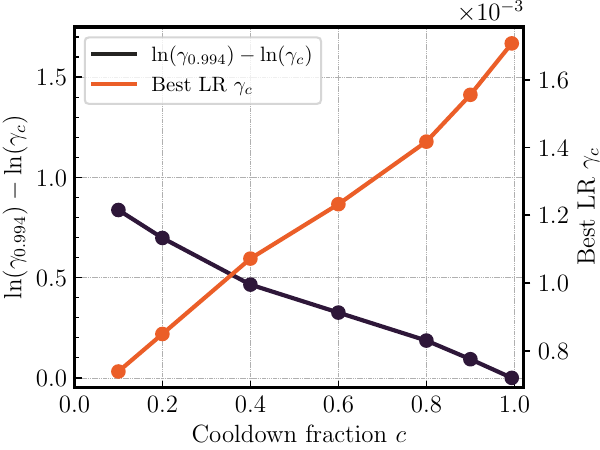}
        \caption{Learning rate transfer}
        \label{fig:analysis-lr-transfer-best}
    \end{subfigure}
    \begin{subfigure}[b]{0.48\columnwidth}
    \includegraphics[width=0.99\textwidth]{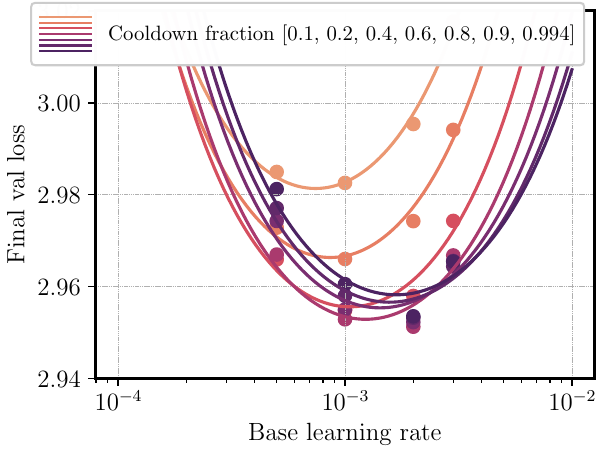}
    \caption{Learning rate sweep}
    \label{fig:analysis-lr-transfer-sweep}
    \end{subfigure}
    \caption{\textbf{(Left)} Re-analysis of learning-rate transfer (\cref{fig:lr-transfer}) for $124$M model. $\gamma_c$ denotes the best performing base learning-rate for cooldown fraction $c$, estimated from a sweep \textbf{(right)}. We observe that the learning-rate transfer (black line) almost perfectly matches the predictions by theory (e.g., $\gamma(0.994) \approx e^{0.7}\gamma^\star(0.2)$). Note that the maximal cooldown fraction is $0.994$ due to warmup and corresponds to a full linear schedule.}
    \label{fig:analysis-lr-transfer}
\end{figure}
%

\section{Limitations}\label{sec:limitations}
We have shown that the empirical performance of various learning-rate schedules for large model training reflects closely the theoretical suboptimality for non-smooth stochastic convex optimization.
We want to stress that we can not expect the bound from \cref{thm:convex} to match training curves \emph{perfectly}: first, it is an upper bound of the loss for convex problems only, and in practice many other factors (e.g., randomness, architecture choices, data mixture) and training techniques (e.g., loss function, weight decay) will impact convergence and stability of training \citep{Wortsman2024}.

The perhaps most glaring limitation of our work is that it is based on a theoretical result for \SGD{}, while most of the empirical evidence we use is obtained with \texttt{Adam(W)}. More generally, the result in \cref{thm:convex} can not explain any performance differences that stem from the optimization algorithm. 
However, we believe that this gap can be closed in future work for several reasons: (i)~by showing similar theoretical results for the methods used in practice; as a first step, we provide a proof for mirror descent (an entire family of methods) in \cref{sec:app:mirror-descent}.  
It has been shown that for diagonal networks, the iterates of \SGD{} are equivalent to mirror descent on a convex problem formulation \citep{Even2023}.
(ii)~Several recent variants of \SGD{} close the gap to \Adam{} on transformer problems \citep{Kunstner2023, Xu2024}. (iii)~It has been shown that most of the parameters of language models can be equally well trained with \SGD{} \citep{Zhao2024}.

The second obvious limitation of \cref{thm:convex} is the convexity assumption, while modern deep learning problems are non-convex. At this point we have no explanation for why the convex theory is still closely matching (some) real-world observations. However, it has been shown that the landscape of neural network optimization problems might be reasonably close to being convex \citep{Hardt2018,Liu2023,Islamov2024}. The sudden performance increase during cooldown is not restricted to language modeling and has also been reported for image problems, e.g., training \texttt{ResNets} with \SGD{} \citep{Sandler2023} or \texttt{ViT}s \citep{Zhai2022}. We verify this through additional experiments for \SGD{} on \texttt{Imagenet} in \cref{sec:app:additional-experiments}, which further contains experiments on \texttt{OpenWebText2}.

Finally, the empirical quantity we compare to the theoretical bound is the test loss. This is limited to situations where the generalization gap between training and test loss is negligible; that being said, the current practice of single-pass training for large models falls within this category \citep{Aitchison2024,Xiao2024}.

\section{Conclusion}
In this paper, we show that learning-rate schedules in practice behave surprisingly similar to what convex optimization theory predicts. This spans across the necessity and optimal length of the cooldown period at the end of training as well as the optimal learning-rate transfer. Notably, our experiments suggest that the theoretical bounds can be used as \emph{testbed for schedule design} before training: we have shown that theoretically inspired schedules achieve notable improvements in practical scenarios.
More broadly, our results suggest that one key characteristic underlying the observed behavior 
is gradient norms that do not go to zero; in practice, this could be due to non-smoothness (of the objective) or due to the problem-inherent gradient noise. We leave it as future work to explain this phenomenon.

\clearpage

\section*{Acknowledgments}
A.~Taylor is supported by the European Union (ERC grant CASPER 101162889). U.~\c{S}im\c{s}ekli is supported by the European Union (ERC grant DYNASTY 101039676). Views and opinions expressed are however those of the author(s) only and do not necessarily reflect those of the European Union or the European Research Council Executive Agency (ERCEA). Neither the European Union nor the granting authority can be held responsible for them. The French government partly funded this work under the management of Agence Nationale de la Recherche as part of the ``France 2030'' program, reference ANR-23-IACL-0008 (PR[AI]RIE-PSAI).

\section*{Impact Statement}

This paper presents work whose goal is to advance the field of 
Machine Learning. There are many potential societal consequences 
of our work, none which we feel must be specifically highlighted here.

\bibliography{lib}
\bibliographystyle{icml2025}

\newpage
\appendix
\onecolumn

\section*{Appendix}

The supplementary material is organized as follows:
\begin{itemize}
    \item \cref{sec:ablation-min}: ablation on bounds on the minimal suboptimality bounds.
    \item \cref{sec:app:experiments-supplementary}: supplementary information on our experiments.
    \item \cref{sec:app:additional-experiments}: additional experiments for training language and vision models
    \item \cref{sec:app:auxiliary-lemmas}: technical lemmas
    \item \cref{sec:app:proofs,sec:app:proof-wsd-bound}: missing proofs of our theoretical results
    \item \cref{sec:app:mirror-descent}: additional analysis for mirror descent
\end{itemize}

\section{Ablation: Min-Suboptimality Bounds}
\label{sec:ablation-min}
Standard convergence results for the \SGD{} method \eqref{eqn:update} make statements on the suboptimality of an average iterate, or of the best objective value (in expectation) found up to iteration $T$. We state one of the standard results for the non-smooth (stochastic) convex setting below \citep{Zinkevich2003}:
\begin{theorem}\label{thm:standard-min-rate}
    Assume that each $f(\cdot,s)$ is convex. Let $(x_t)$ be the iterates given by \ref{item:asum:base-lr}, with $\eta_t > 0$ and $\gamma > 0 $.  Let $x_\star \in \R^d$ and define $D:= \|x_1 - x_\star\|$.
    Under \ref{item:asum:lipschitz}, we have 
    \begin{align}\label{eqn:min-bound}
        \min_{t=1,\ldots,T} \E[f(x_t) - f(x_\star)] \leq \frac{1}{2\gamma \sum_{t=1}^T \eta_t} \big[D^2 + \gamma^2 \sum_{t=1}^T \eta_t^2 G_t^2 \big].
    \end{align}
    The right-hand side of the above is minimized by $\gamma^\star = \frac{D}{\sqrt{\sum_{t=1}^T G_t^2\eta_t^2}}$. Plugging in $\gamma^\star$ yields
    \begin{align*}
        \min_{t=1,\ldots,T} \E[f(x_t) - f(x_\star)] \leq \frac{D\sqrt{\sum_{t=1}^T G_t^2\eta_t^2}}{\sum_{t=1}^T \eta_t}.
    \end{align*}
\end{theorem}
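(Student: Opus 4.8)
The plan is to run the classical ``one-step distance expansion plus telescoping'' argument for stochastic subgradient descent in the convex Lipschitz setting, and then to weaken the resulting weighted-sum bound into a minimum bound using positivity of the schedule. This is the standard Zinkevich-type analysis, so I expect no conceptual difficulty; the work is in organizing the three ingredients (update rule, convexity, the gradient-norm bound) and handling the stochastic expectations cleanly.

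First I would expand the squared distance to $x_\star$ along a single step. Using the update \ref{item:asum:base-lr}, $\|x_{t+1}-x_\star\|^2 = \|x_t-x_\star\|^2 - 2\gamma\eta_t\iprod{g_t}{x_t-x_\star} + \gamma^2\eta_t^2\|g_t\|^2$. Convexity \ref{item:asum:convexity}, applied to the pair $(x_t,x_\star)$ for the realization $s_t$ with $g_t\in\partial f(x_t,s_t)$, gives $\iprod{g_t}{x_t-x_\star}\geq f(x_t,s_t)-f(x_\star,s_t)$, so after rearranging I obtain the per-step inequality $2\gamma\eta_t\big(f(x_t,s_t)-f(x_\star,s_t)\big)\leq \|x_t-x_\star\|^2-\|x_{t+1}-x_\star\|^2+\gamma^2\eta_t^2\|g_t\|^2$.

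Second I would take expectations and sum. Conditioning on the history up to $x_t$ and using that $x_t$ is measurable while $x_\star$ is deterministic, the tower property yields $\E[f(x_t,s_t)]=\E[f(x_t)]$ and $\E[f(x_\star,s_t)]=f(x_\star)$, turning the above into $2\gamma\eta_t\,\E[f(x_t)-f(x_\star)]\leq \E\|x_t-x_\star\|^2-\E\|x_{t+1}-x_\star\|^2+\gamma^2\eta_t^2\,\E\|g_t\|^2$. Summing over $t=1,\dots,T$ telescopes the distance terms; dropping the nonnegative remainder $\E\|x_{T+1}-x_\star\|^2$ and inserting $\E\|g_t\|^2\leq G_t^2$ from \ref{item:asum:lipschitz} gives $2\gamma\sum_{t=1}^T\eta_t\,\E[f(x_t)-f(x_\star)]\leq D^2+\gamma^2\sum_{t=1}^T\eta_t^2 G_t^2$.

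Third I would pass to the minimum and optimize in $\gamma$. Since every $\eta_t>0$, each summand is at least $\min_{t}\E[f(x_t)-f(x_\star)]$, so $\sum_t\eta_t\,\E[f(x_t)-f(x_\star)]\geq\big(\sum_t\eta_t\big)\min_t\E[f(x_t)-f(x_\star)]$; dividing by $2\gamma\sum_t\eta_t$ then produces \eqref{eqn:min-bound}. The right-hand side has the form $A/\gamma+B\gamma$ with $A=D^2/(2\sum_t\eta_t)$ and $B=(\sum_t\eta_t^2G_t^2)/(2\sum_t\eta_t)$, so minimizing over $\gamma>0$ gives $\gamma^\star=\sqrt{A/B}=D/\sqrt{\sum_t\eta_t^2G_t^2}$ with optimal value $2\sqrt{AB}=D\sqrt{\sum_t\eta_t^2G_t^2}/\sum_t\eta_t$, matching the final display. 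The only step requiring genuine care is the stochastic bookkeeping in the second step: one must ensure the sampling scheme makes the conditional expectation of $f(x_t,s_t)$ equal to $f(x_t)$, and apply the norm bound \emph{in expectation} rather than pointwise. The weakening in the third step, from a weighted average to the minimum, is exactly what sacrifices the schedule-dependent shape that the last-iterate bound of \cref{thm:convex} retains.
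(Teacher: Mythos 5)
Your proof is correct and follows essentially the same route as the paper: the one-step expansion, convexity, conditional expectation, and telescoping argument is precisely the paper's proof of Lemma~\ref{lem:basic-ineq} (applied with $u = x_\star$, $k=1$, and step sizes $\gamma\eta_t$), after which the passage to the minimum and the optimization $A/\gamma + B\gamma$ over $\gamma>0$ are the same trivial steps. No gaps; the stochastic bookkeeping you flag (tower property and applying the bound $\E\|g_t\|^2 \le G_t^2$ in expectation) is handled exactly as in the paper.
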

We now repeat the theoretical simulations, but, instead of $\Omega_t$ from \eqref{eqn:def-omega}, using
\begin{align}\label{eqn:omega-ablation}
    \Omega_t=\frac{1}{2\gamma \sum_{s=1}^t \eta_s} \big[D^2 + \gamma^2 \sum_{s=1}^t \eta_s^2 G_s^2 \big]
\end{align}
\subsection{Ablation of \cref{sec:multiple-horizon}}
\begin{figure}[!htb]
    \centering
    \includegraphics[width=0.82\linewidth]{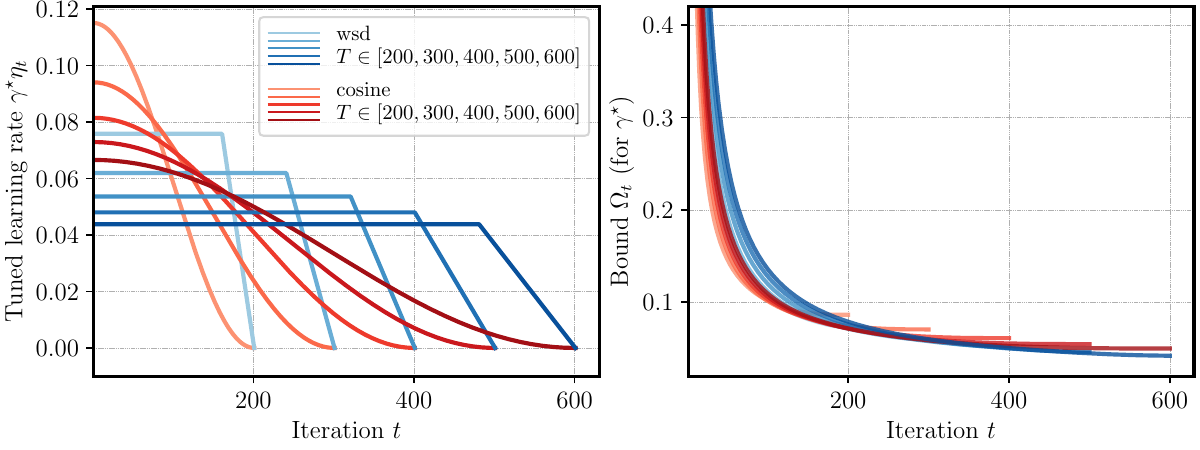}
    \caption{Same as \cref{fig:multiple-horizon-1}, but with $\Omega_t$ from \eqref{eqn:omega-ablation}}
    \label{fig:multiple-horizon-1-ablation}
\end{figure}
\begin{figure}[!htb]
    \centering
    \begin{subfigure}[b]{0.43\columnwidth}
    \includegraphics[width=0.99\textwidth]{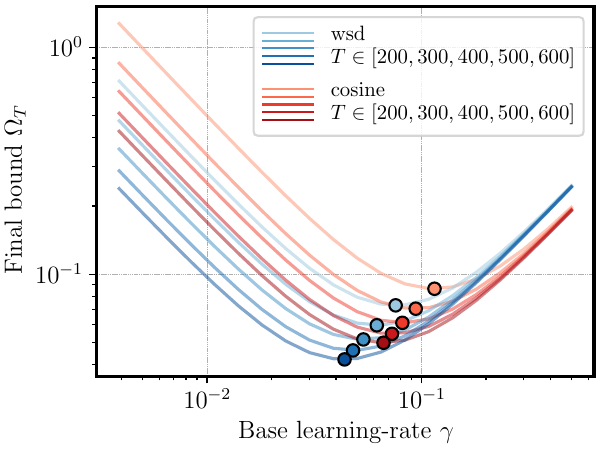}
    \caption{Learning-rate sweep}
    \end{subfigure}
    \begin{subfigure}[b]{0.43\columnwidth}
        \includegraphics[width=0.99\textwidth]{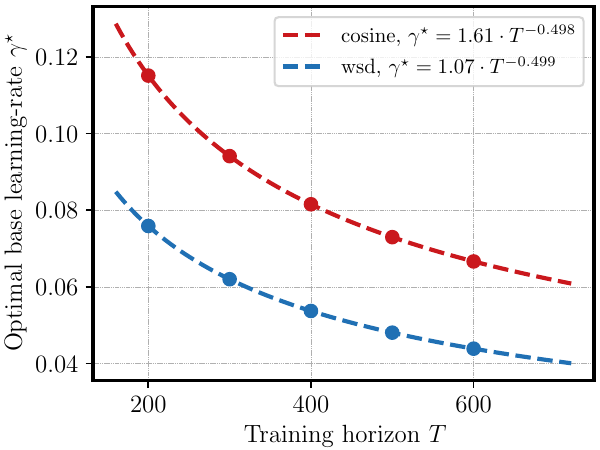}
    \caption{Optimal base learning-rate vs.\ $T$}
    \end{subfigure}
    \caption{Same as \cref{fig:multiple-horizon-2}, but with $\Omega_t$ from \eqref{eqn:omega-ablation}}
    \label{fig:sweep-multiple-2-ablation}
\end{figure}
The bound on the best-so-far bound has a very different shape of the last-iterate bound. This shows that standard bounds such as in \cref{thm:standard-min-rate} do not capture the real-world convergence observed in \citet{Haegele2024}.
\subsection{Ablation of \cref{sec:cooldown-length}}
\begin{figure}[!htb]
    \centering
    \begin{subfigure}[b]{0.43\columnwidth}
    \includegraphics[width=0.99\textwidth]{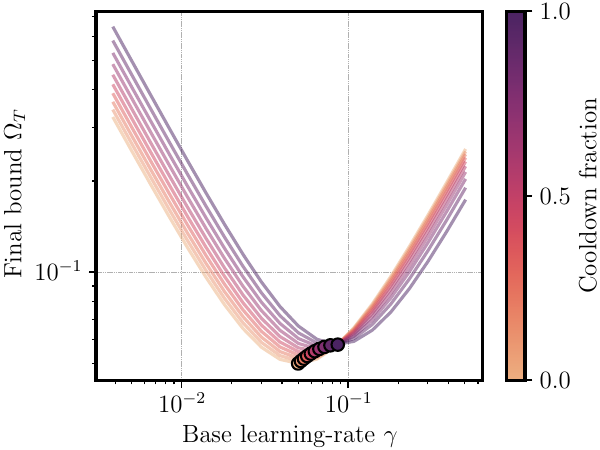}
    \caption{Learning-rate sweep}
    \end{subfigure}
    \begin{subfigure}[b]{0.43\columnwidth}
        \includegraphics[width=0.99\textwidth]{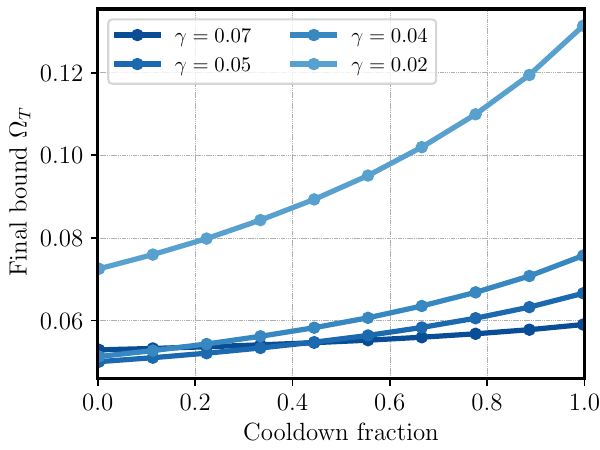}
    \caption{Final bound vs.\ cooldown fraction}
    \end{subfigure}
    \caption{Same as \cref{fig:cooldown-length-1}, but with $\Omega_t$ from \eqref{eqn:omega-ablation}}
    \label{fig:cooldown-length-1-ablation}
\end{figure}
\begin{figure}[!htb]
    \centering
    \includegraphics[width=0.86\linewidth]{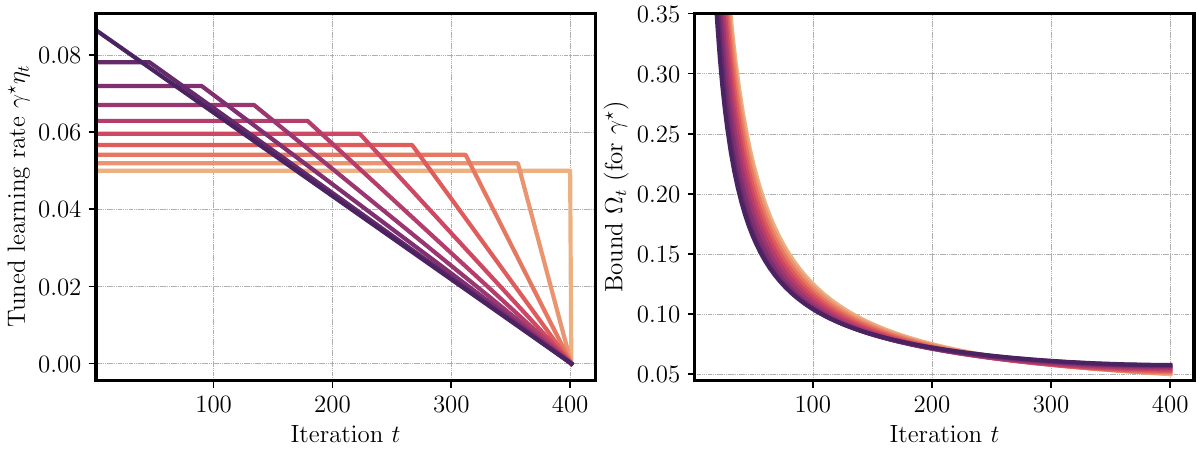}
    \caption{Same as \cref{fig:cooldown-length-2}, but with $\Omega_t$ from \eqref{eqn:omega-ablation}}
    \label{fig:cooldown-length-2-ablation}
\end{figure}
%
\clearpage
\section{Experiments: Supplementary Material}\label{sec:app:experiments-supplementary}
\subsection{Non-smooth Convex Example}\label{sec:app:convex-example}
Here, we provide details for the non-smooth convex toy example of $\min_{x \in \R^d} \|Ax-b\|_{\infty}$ mentioned in \cref{sec:lower-bound}. We set $d=2$ and pick $A\in \R^{20\times d}$ uniformly at random from $[-1,1]$. We generate an oracle $x_\star \in \R^d$ and set $b=Ax_\star$. We then run gradient descent (\texttt{GD}) for $T=400$ iterations with the \wsd{} schedule (cooldown fraction $0.2$ and $\gamma=0.02$). As baseline, we plot the constant schedule with $\gamma=0.02$ and a \cosine{} schedule with $\gamma=0.04$. We pick zero as starting point, except for the constant schedule, where we pick $(10^{-3}, 10^{-3})$ to obtain a visually distinguishable path.

The objective function and iterate paths are shown in \cref{fig:convex-example}. 
\begin{figure}[!htb]
    \centering
    \begin{subfigure}[b]{0.49\columnwidth}
    \includegraphics[width=0.9\linewidth]{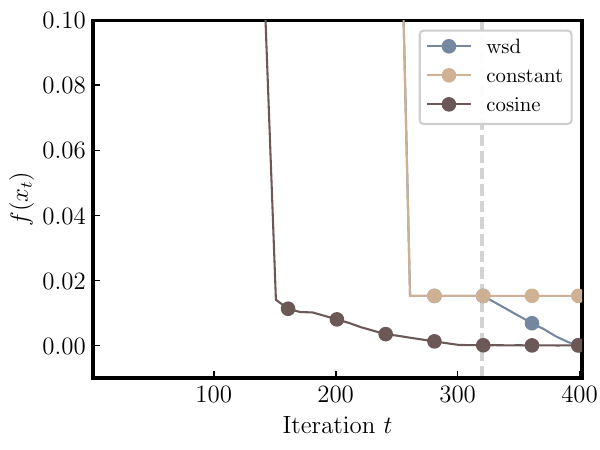}
    \end{subfigure}
    \begin{subfigure}[b]{0.49\columnwidth}
    \includegraphics[width=0.9\linewidth]{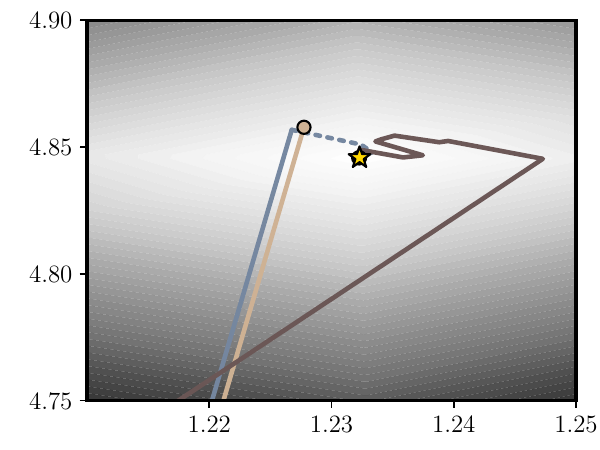}
    \end{subfigure}
    \caption{\textbf{(Left)} Sudden drop of the loss for \wsd{} schedule for a convex, non-smooth problem. \textbf{(Right)} Iterate path for the three schedules. For \wsd{}, the cooldown period is indicated with the dashed line. Star marks solution.}
    \label{fig:convex-example}
\end{figure}

\subsection{Schedule Comparison}\label{sec:app:schedule-comparison}
We compare the upper bound $\Omega_t$ from \eqref{eqn:def-omega} for various schedules:
\begin{itemize}
    \item \wsd{} with cooldown fraction $0.2$,
    \item \cosine{},
    \item constant schedule,
    \item linear-decay schedule, that is, \wsd{} with cooldown fraction of $1$,
    \item \invsqrt{} schedule, where $\eta_t = 1/\sqrt{t}$,
    \item \texttt{1-sqrt} schedule, where $\eta_t = 1-\sqrt{\frac{t-1}{T}}$.
\end{itemize}
We assume $D=1,G_t=1$ and set $T=400$. For each schedule we sweep the base learning-rate $\gamma$ and plot the bound $\Omega_t$ for $\gamma=\gamma^\star$ obtained from \cref{cor:optimal-base-lr}.
\begin{figure}[!htb]
    \centering
    \begin{subfigure}[b]{0.65\columnwidth}
    \includegraphics[width=0.99\textwidth]{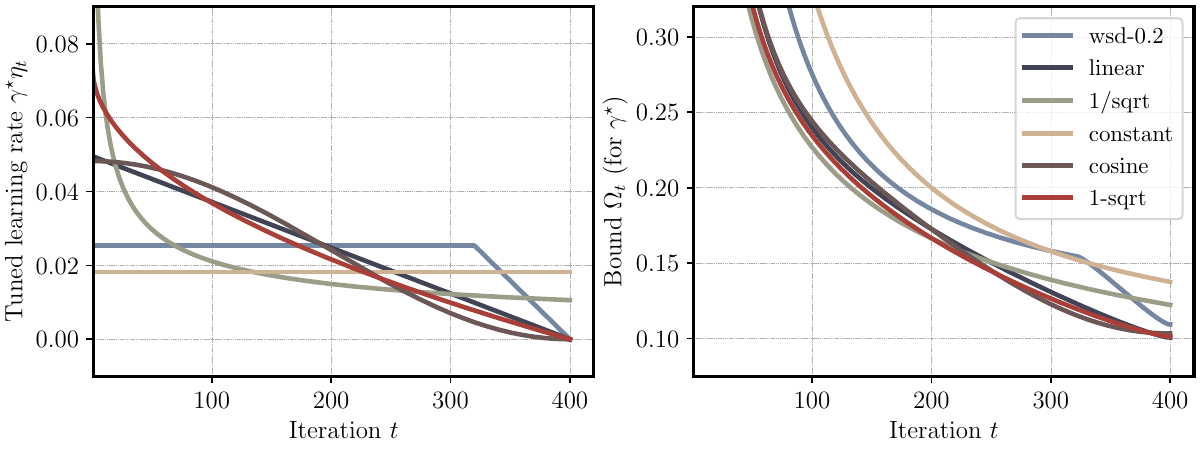}
    \caption{Convergence}
    \end{subfigure}
    \begin{subfigure}[b]{0.33\columnwidth}
        \includegraphics[width=0.99\textwidth]{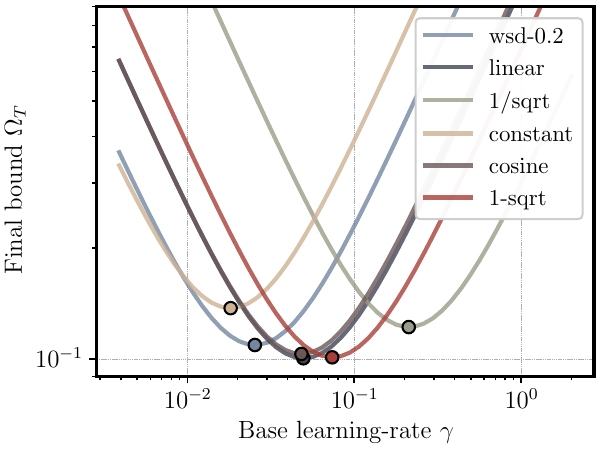}
    \caption{Learning-rate sweep}
    \end{subfigure}
    \caption{Comparison of various learning-rate schedules. Convergence is plotted with the optimal base learning-rate $\gamma^\star$ (chosen individually for each schedule).}
    \label{fig:schedule-comparison}
\end{figure}
\subsection{Cosine Cycle Length}
For the \cosine{} schedule, an important hyperparameter is its \textit{cycle length}, that is, the amount of training where the schedule first reaches zero.
Originally, it was proposed in \citet{Loshchilov2017} to use multiple warm restarts (a cycle length less than one). 
Later, \citet{Hoffmann2022} show empirically that the best performance in language modeling tasks is obtained by setting the cycle length to one (the half-cosine matches exactly the training duration).

To the best of our knowledge, this recommendation is based mostly on empirical insights. Using the bound obtained in \cref{thm:convex}, our analysis shows that a cycle length of one obtains the lowest bound $\Omega_T$. Thus, the theoretical bound is in accordance to the empirical conclusion from \citet{Hoffmann2022}.

Note that \citet{Hoffmann2022} choose the base learning-rate $\gamma$ equally for all cycle lengths.
To match the setting of their experiment, we pick $\gamma^\star$ for a cycle length of one, and use this for all other cycle lengths as well. Picking $\gamma^\star$ for each cycle length individually yields qualitatively the same result (the optimal cycle length being one), but with slightly less pronounced differences (plots not shown).
In contrast to previous simulations, the final value of the schedule is chosen as $0.1$ of the peak learning-rate (instead of zero), again in order to match the setting of \citet{Hoffmann2022}.
\begin{figure}[!htb]
    \centering
    \includegraphics[width=0.99\textwidth]{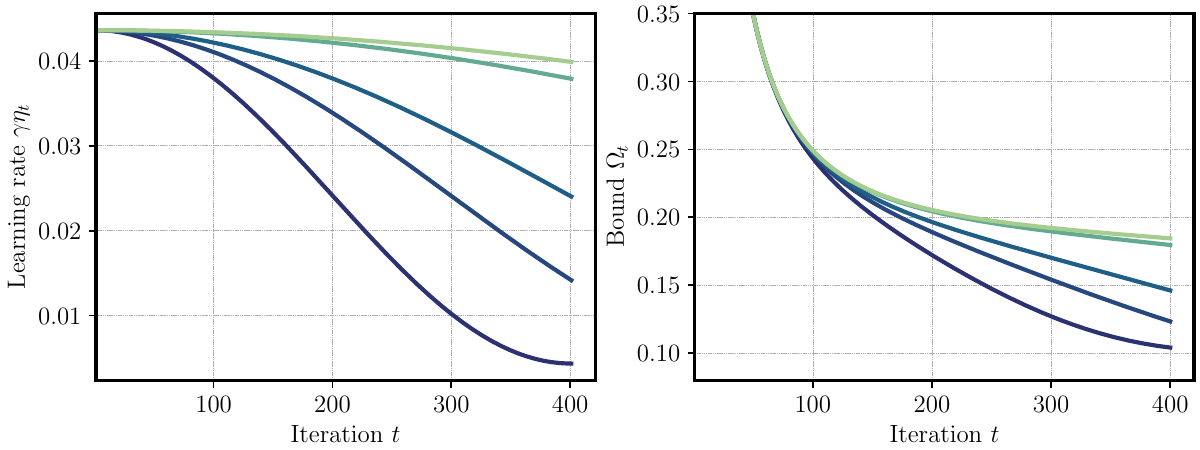}
    \caption{Comparison of cycle lengths for the \cosine{} schedule. Compare to Figure A1 in \citet{Hoffmann2022}.}
    \label{fig:cosine-cycle}
\end{figure}
\subsection{Details on Lower Bound Computation}\label{app:sec:lower-bound-details}
We provide additional details for the simulation in \cref{sec:lower-bound}. We compute the lower bounds with the \texttt{PEPit} package \citep{Goujaud2024}. For our purpose, we use the class of convex $G$-Lipschitz functions and gradient descent (\texttt{GD}) with the step sizes $\gamma \cdot \eta_t$. In \texttt{PEPit}, we use the \texttt{MOSEK} solver. As the size of the semidefinite program grows with $T$, we choose a rather small $T=60$, and compute the lower bound of $\E[f(x_t)-f(x_\star)]$ for all $t+1$ that are multiple of $5$. Note that the specific worst-case function $f$ constructed by \texttt{PEPit} can be different for each $t$ (as it maximizes the suboptimality exactly at iteration $t$). We set $\gamma$ to $\gamma^\star$ minimizing the upper bound $\Omega_t$ (cf.\ \cref{cor:optimal-base-lr}). We set $G=D=1$.

\subsection{Details on Experiments in \cref{fig:wsd-cosine-real} and \cref{sec:applications}} \label{sec:app:training-details}

\paragraph{Training details.}
The loss curves in~\cref{fig:wsd-cosine-real} are an exact reproduction of the curves in \citep[Fig.\ 3]{Haegele2024};  they are obtained from training a 210M \texttt{Llama}-style transformer \citep{Touvron2023}.
The base learning-rate of \cosine{} is $0.001$, and for \wsd{} it is $0.0005$.

All of the following applies to the training runs used in the experiments in \cref{sec:applications}: we use exactly the same model architecture as in \citet{Haegele2024}, which is a \texttt{Llama}-style transformer with $12$ ($24$) layers and 12 attention heads for the $124$M ($210$M) model. The dataset used for training is \texttt{SlimPajama} \citep{Soboleva2023}. Specifically, for runs with $50\,000$ steps (5B tokens), we use the \texttt{SlimPajama-6B} subset available on Hugging Face (link below). For the extension runs with $100\,000$ and $200\,000$ steps (approximately 10B and 20B tokens), we randomly sample 550M documents (roughly 5\% of full corpus) from the full \texttt{SlimPajama-627B} to arrive at a corpus of 30B tokens.

We train for $50\,000$ steps, where the first $300$ steps are reserved for linear warmup. We use \texttt{AdamW} \citep{Loshchilov2019} with a weight decay of $0.1$. For all further details we refer to \citet[App. A.1]{Haegele2024}. 
Note that all training curves show the validation loss computed over a subset of $32$ batches, while the final validation loss is computed over approx.\ $6\,000$ batches; hence, the final value of the loss curve might not be identical to the final loss computed over the full validation set. One single run over $50\,000$ steps takes roughly $2$ hours on two Nvidia H100 GPUs.


The training runs can be reproduced with the following repositories:

Training code from \citet{Haegele2024}: \url{https://github.com/epfml/schedules-and-scaling/}\\
Dataset: \url{https://huggingface.co/datasets/DKYoon/SlimPajama-6B}

\paragraph{Fitting procedure.} We execute training runs on a grid of base learning-rates $\gamma \in \{0.0005, 0.001, 0.002, 0.003\}$ and cooldown fractions $c\in \{0.1, 0.2, 0.4, 0.6, 0.8, 0.9, 0.994\}$. Note that the largest cooldown fraction is slightly smaller than $1$ as the remaining $0.6$\% percent of steps constitute warmup. The final validation set loss (after 50k steps) for all runs is displayed in \cref{fig:analysis-lr-transfer-sweep} (every dot marks one single training run).

We then fit, for each cooldown fraction $c$ separately, a function of the form $h_c(\gamma)=\frac{A_c}{\gamma}+B_c\gamma + C_c$, where $A_c,B_c,C_c$ are fittable parameters. The resulting function is plotted as solid line in 
\cref{fig:analysis-lr-transfer-sweep}. The functional form of $h_c(\gamma)$ is inspired by the bound \eqref{eqn:convex-lipschitz-bound}.

We then approximate the optimal base learning-rate $\gamma^\star(c)$ by computing the minimizer of $h_c(\gamma)$.
The result of this step is plotted in red in \cref{fig:analysis-lr-transfer-best}.

\paragraph{Assessing loss differences through scaling laws.}

In this section, we estimate with scaling laws how much more parameters or training data/steps would be needed to make up a loss difference of $0.01$ (see end of \cref{sec:continued-training} for context). The Chinchilla law \citep{Hoffmann2022} states that the loss $L(N,D)$ for a model with parameters $N$ after training for $D$ tokens can be estimated with
\begin{align}
    L(N,D) = E + \frac{A}{N^\alpha} + \frac{B}{D^\beta} \;,
\end{align}
where $E, A, B, \alpha, \beta$ are usually fitted from data.
More concretely, assume we have trained a model of size $N_1$ for $D_1$ tokens. To arrive at an improvement of $\delta$ with a new combination of the number of parameters and tokens to ($N_2, D_2$), we obtain
\begin{align*}
    \delta & = L(N_1, D_1) - L(N_2, D_2)= A \left( \frac{1}{N_1^\alpha} - \frac{1}{N_2^\alpha} \right) + B \left( \frac{1}{D_1^\beta} - \frac{1}{D_2^\beta} \right)
\end{align*}

Consequently, we can consider two cases:
\begin{itemize}
    \item Case 1: Fix $N_1=N_2$. That is, we fix a parameter size and look for the number of tokens by which we need to extend the training to improve the loss by $\delta$. Solving the above equation then gives
          \begin{align*}
              D_2  = \left({\frac{1}{D_1^{\beta}}-\frac{\delta}{B}}\right)^{-\frac{1}{\beta}}\;.
          \end{align*}
    \item Case 2: Fix $D_1=D_2$. This is the case where we estimate the size that would achieve the desired loss improvement for the same training data. Similarly, this results in
          \begin{align*}
              N_2  = \left({\frac{1}{N_1^{\alpha}}-\frac{\delta}{A}}\right)^{-\frac{1}{\alpha}}\;.
          \end{align*}
\end{itemize}
In the settings of our experiments we have $N_1 \in \{124\text{M}, 210\text{M}\}$ and $D_1\in\{10.24\text{B}, 20.48\text{B}\}$\footnote{Batch size $50$, two accumulation steps, two GPUs, sequence length $512$, $100$/$200$k steps.}. Plugging in the constants by \citet{Besiroglu2024}\footnote{$A=482.01, B=2085.43, E=1.8172, \alpha=0.3478, \beta=0.3658$.} and using $\delta=0.01$, yields\footnote{Note that the Chinchilla scaling laws were obtained in a different setup. In particular, we do not have access to the exact same dataset and tokenizer, which makes the scaling law not directly transferrable. However, our experiments are comparable in the general dataset composition (webcrawl data extended with other sources) and training task (decoder-only language models). Moreover, with the difference in vocabulary size (32k vs. 50k), we can scale the loss with the rough approximation of $\ln(32\cdot 10^3)/\ln(50\cdot 10^3) \approx 0.959$ to align the cross-entropy losses. This does not substantially change the results of this analysis.}
\begin{itemize}
    \item Case 1: Fix $N_1=N_2$. In this case, the scaling law results in $D_2\in \{10.88\text{B},22.16\text{B}\}$ for $D_1\in\{10.24\text{B}, 20.48\text{B}\}$, respectively. This means that we would need to train the models for $640$M or $1.68$B more tokens to match the adapted schedule.
    \item Case 2: Fix $D_1=D_2$. In this case, we obtain $N_2\in \{ 129.0\text{M}, 220.1\text{M} \}$ for $N_1 \in \{124\text{M}, 210\text{M}\}$. In other words, increasing the number of parameters by $5$M or $10$M would approximately result in the same loss after fixing the amount of tokens.
\end{itemize}
For both cases, the estimates from the scaling law match our general intuition and would require either noticeably training longer by $6-8$\% or growing the model by $4-5$\%, in line with the argument at the end of \cref{sec:continued-training}. Also note that the (relative) additional cost implied by the Chinchilla law to obtain $0.01$ loss improvement grows with the (extended) training length $D_1$.
\subsection{Miscellaneous Plots}\label{sec:app:misc-plots}
\begin{figure}[!htb]
    \centering
    \begin{subfigure}[b]{0.49\columnwidth}
    \includegraphics[width=0.9\linewidth]{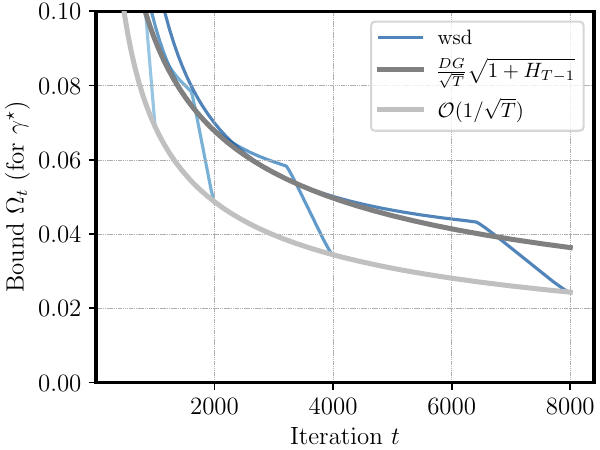}
    \end{subfigure}
    \begin{subfigure}[b]{0.49\columnwidth}
    \includegraphics[width=0.9\linewidth]{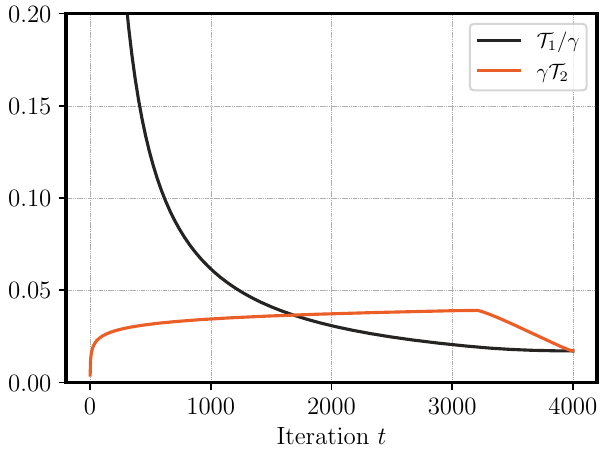}
    \end{subfigure}
    \caption{\textbf{(Left)} The benefit of cooldown is reflected in the absence of logarithmic terms. Dark grey marks the bound of the constant schedule. \textbf{(Right)} Plotting the individual terms of the bound $\Omega_t=\mathcal{T}_1/\gamma + \gamma \mathcal{T}_2$ with $\gamma=\gamma^\star$ for the \wsd{} schedule. The sudden drop of the bound comes from the term $\gamma \mathcal{T}_2$.}
    \label{fig:wsd-detail}
\end{figure}
\begin{figure}[!htb]
    \centering
    \begin{subfigure}[b]{0.49\columnwidth}
    \includegraphics[width=0.95\linewidth]{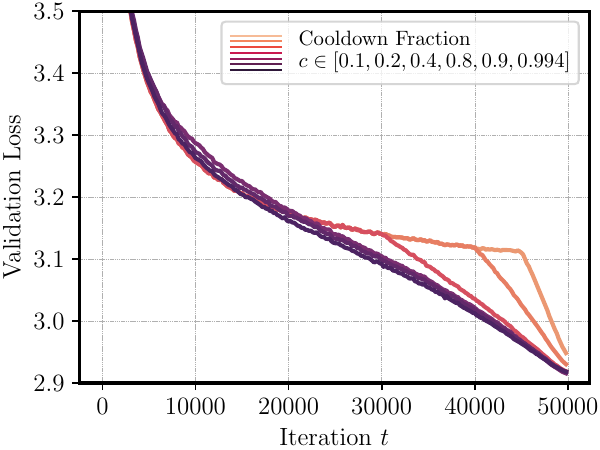}
    \end{subfigure}
    \begin{subfigure}[b]{0.49\columnwidth}
    \includegraphics[width=0.95\linewidth]{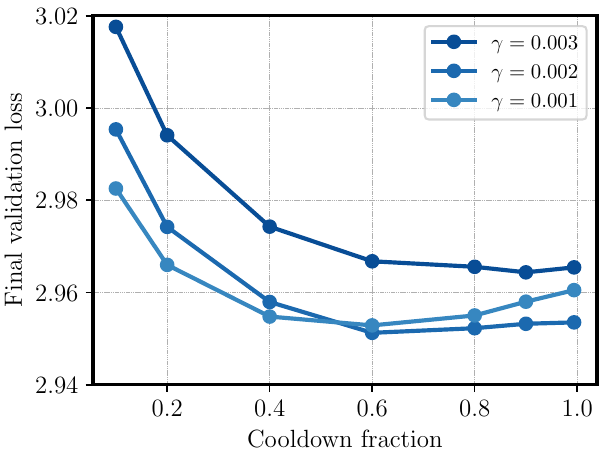}
    \end{subfigure}
    \caption{\textbf{(Left)} Analogous to \cref{fig:cooldown-length-2} (right) with real training curves. We remove cooldown fraction $0.6$ as its loss curve shows a spike and recovers only late. \textbf{(Right)} Analogous of \cref{fig:cooldown-length-1} (right) with real training data that shows a parabola shape for fixed learning-rates.}
    \label{fig:reanalysis-cooldown-length}
\end{figure}
\begin{figure}[!htb]
    \centering
    \begin{subfigure}[b]{0.48\columnwidth}
    \includegraphics[width=0.99\linewidth]{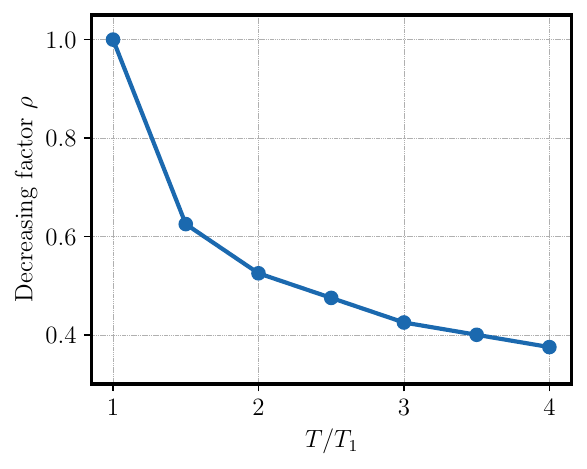}
    \end{subfigure}
    \begin{subfigure}[b]{0.49\columnwidth}
    \includegraphics[width=0.99\linewidth]{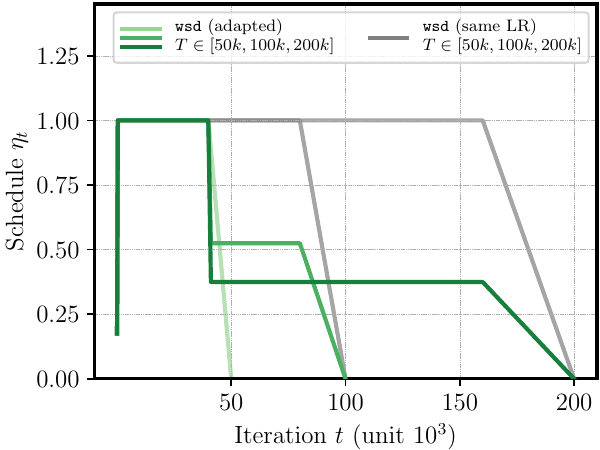}
    \end{subfigure}
    \caption{\textbf{(Left)} Decreasing factor computed with \cref{cor:optimal-base-lr} for extended schedule up to $T$ (where $T_1$ is length of the short run). See \cref{sec:continued-training}, \ref{item:lr-decrease} for details. \textbf{(Right)}: Extended schedule for the training runs in \cref{fig:reanalysis-horizon-transfer}; note that this schedule is multiplied by $\gamma=0.001$.}
    \label{fig:misc-plots-lr-decrease}
\end{figure}
\begin{figure}[!htb]
    \centering
    \includegraphics[width=0.99\textwidth]{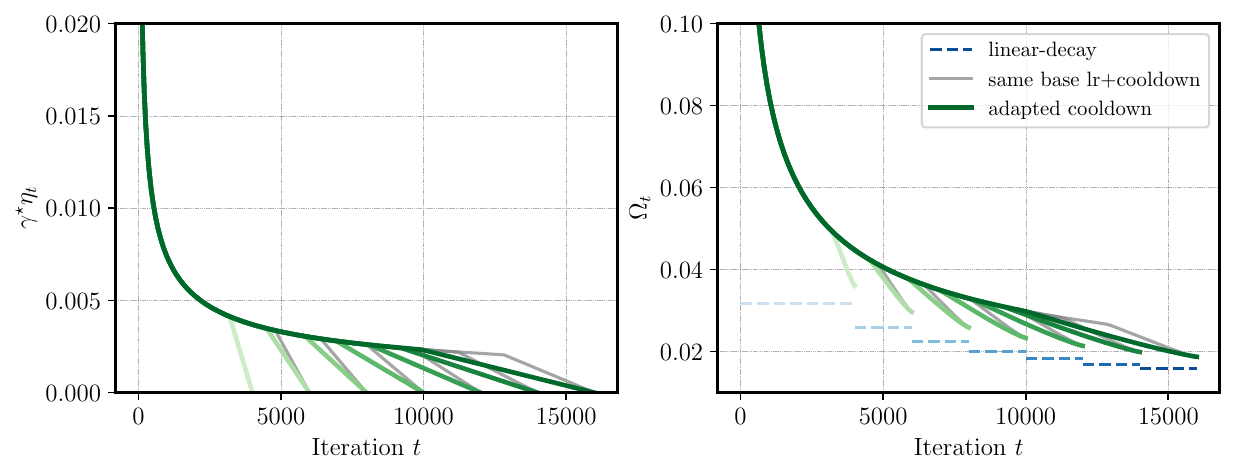}
    \caption{\textbf{(Left)} Transferring the \invsqrt{} schedule with linear cooldown from horizon $T_1=4000$ to $T_2\in [1.5T_1,4T_1]$. 
    \textbf{(Right)} Adapting the cooldown length has only small benefits.
    Dashed horizontal lines mark bound for linear-decay schedule with tuned $\gamma^\star$. See \cref{sec:continued-training}, \ref{item:cooldown-extension} for details.}
    \label{fig:horizon-transfer-ii-sqrt}
\end{figure}
\begin{figure}[!htb]
    \centering
    \begin{subfigure}[b]{0.48\columnwidth}
    \includegraphics[width=0.99\linewidth]{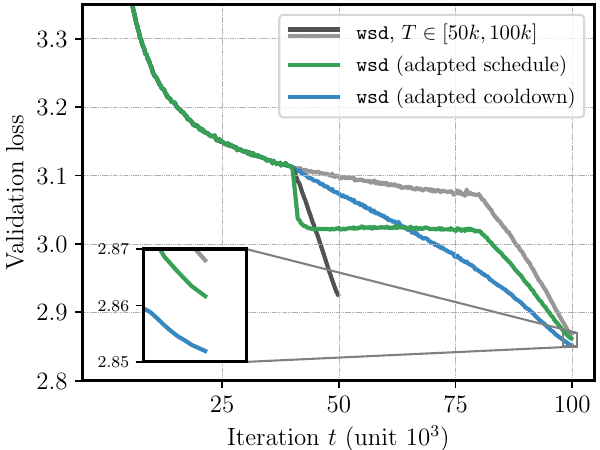}
    \end{subfigure}
    \begin{subfigure}[b]{0.49\columnwidth}
    \includegraphics[width=0.99\linewidth]{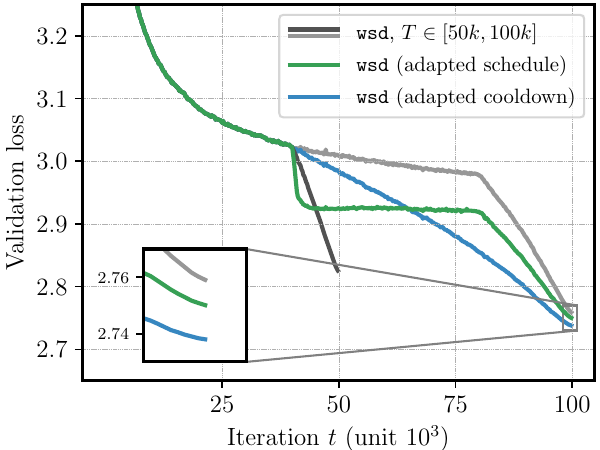}
    \end{subfigure}
    \caption{Experiment with adapted cooldown length for $124$M model \textbf{(left)} and $210$M \textbf{(right)}. See \cref{sec:continued-training}, \ref{item:cooldown-extension} for details.}
    \label{fig:misc-plots-adapt-cooldown}
\end{figure}
%
\clearpage
\section{Additional Experiments}\label{sec:app:additional-experiments}
\subsection{Additional Experiments on \texttt{Imagenet}}\label{sec:app:imagenet}
In order to corroborate our findings on additional data domains and architectures, we conduct additional experiments on training \texttt{ResNet50} \citep{He2016} on \texttt{Imagenet}.
We train all models with \SGD{} with heavy-ball momentum.

Training is done using the \texttt{timm} library \citep{Wightman2019}. All runs are using weight decay of $0.0001$, momentum $0.9$, batch size $4\times256$, and standard data augmentation techniques.\footnote{In \texttt{timm} we set the configuration \texttt{hflip}$=0.5$, \texttt{vflip}$=0$, \texttt{crop-pct}$=1.0$.}

\cref{fig:resnet50-1,fig:resnet50-2} confirm our previous findings on this additional training task. \textbf{Note that} the \texttt{Imagenet} training is in a different regime as we are not training with a single pass. Hence, we expect the validation set metrics to be confounded by generalization effects beyond training loss only.

\begin{figure}[!h]
    \centering
    \begin{subfigure}[b]{0.32\columnwidth}
    \includegraphics[width=0.99\linewidth]{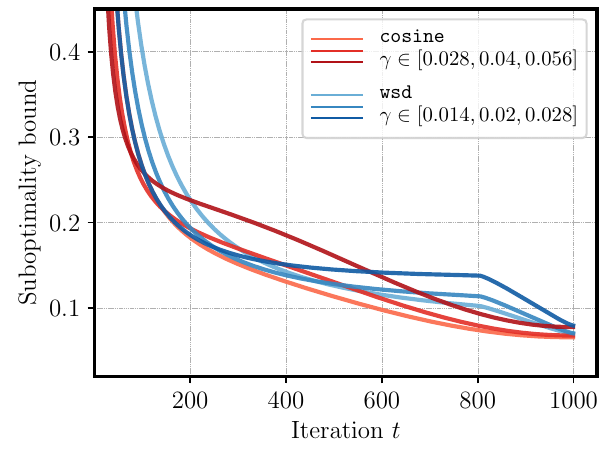}
    \caption{Theoretical bound}
    \end{subfigure}
    \begin{subfigure}[b]{0.32\columnwidth}
    \includegraphics[width=0.99\linewidth]{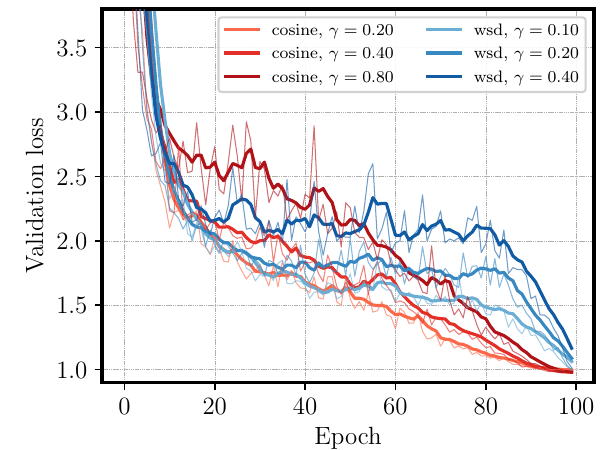}
    \caption{Validation loss \texttt{Imagenet}}
    \end{subfigure}
    \begin{subfigure}[b]{0.32\columnwidth}
    \includegraphics[width=0.99\linewidth]{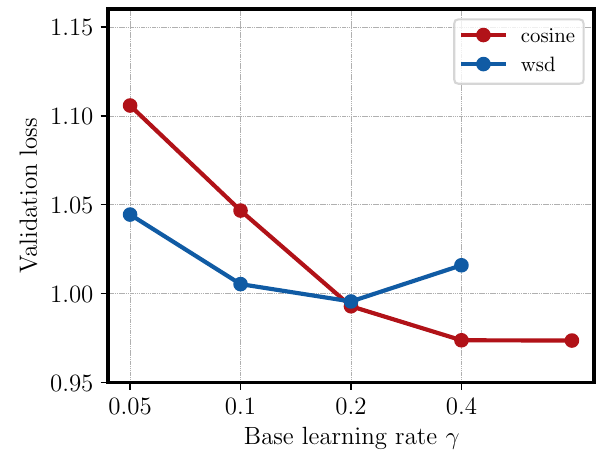}
    \caption{Learning-rate sweep \texttt{Imagenet}}
    \end{subfigure}
    \caption{The theoretical bound \textbf{(left)} for a range of base learning rates $\gamma$ (and setting $D=G=1$) qualitatively matches the empirical (validation) loss curves for training \texttt{ResNet50} on \texttt{Imagenet} with \SGD{} \textbf{(middle)}. (We display a running average over five epoch in thick to smoothen the plot, and the original data in thin.) \textbf{(Right)} We again find for the optimal base learning rate that it holds $\gamma^\star(\cosine{}) \approx 2\gamma^\star (\wsd{})$, as is predicted by the theory.
    }
    \label{fig:resnet50-1}
\end{figure}

\begin{figure}[!h]
    \centering
    \begin{subfigure}[b]{0.32\columnwidth}
    \includegraphics[width=0.99\linewidth]{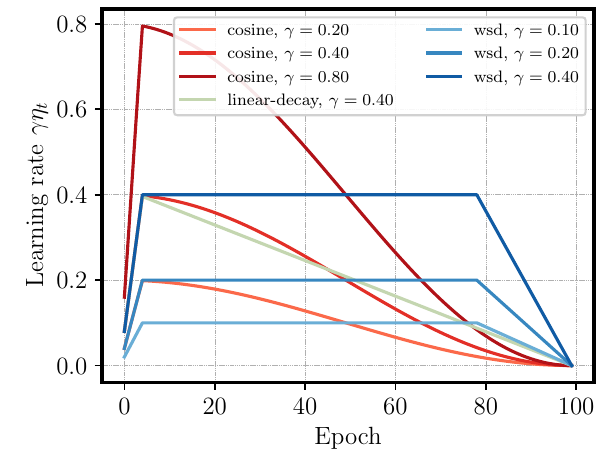}
    \caption{Learning rate}
    \end{subfigure}
    \begin{subfigure}[b]{0.32\columnwidth}
    \includegraphics[width=0.99\linewidth]{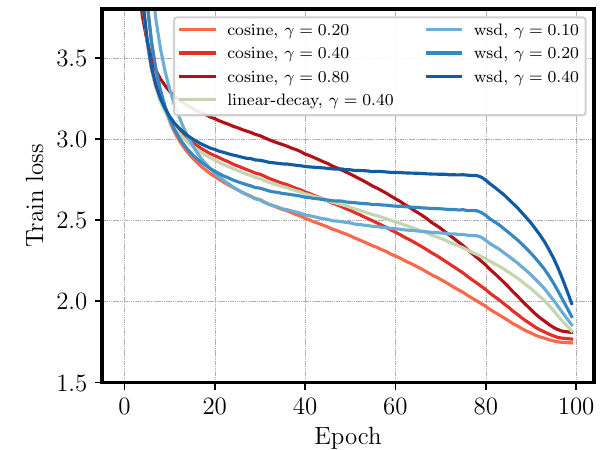}
    \caption{Train loss (epoch-wise averages)}
    \end{subfigure}
    \begin{subfigure}[b]{0.32\columnwidth}
    \includegraphics[width=0.99\linewidth]{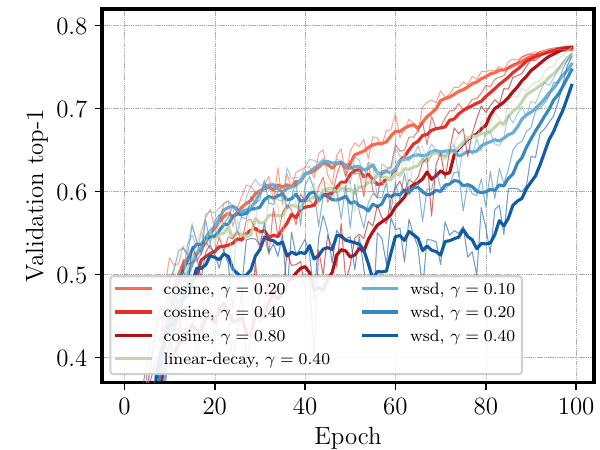}
    \caption{Top-1 Validation accuracy}
    \end{subfigure}
    \caption{Training \texttt{ResNet50} on \texttt{Imagenet} with \SGD{}: Plotting the three best base learning rates $\gamma$ for both \cosine{} and \wsd{} schedule, as well as linear-decay schedule with learning rate transfer following \cref{sec:lr_transfer}. Sudden drop of training loss \textbf{(middle)} and increase of validation set accuracy \textbf{(right)} is clearly visible for \wsd{} schedule. Linear-decay with zero-shot transfer of $\gamma^\star$ (multiplier $\exp(0.7)\approx 2$) improves over \wsd{}, which confirms the findings of \cref{sec:lr_transfer}.  
    For validation set metrics, we display a running average over five epoch in thick to smoothen the plot, and the original data in thin.
    Note that the final train loss of \wsd{} appears slightly higher as we display the \emph{epoch-wise average} of mini-batch losses; due to the steep descent of the loss at the end of training this slightly distorts the plot.
    }
    \label{fig:resnet50-2}
\end{figure}
\subsection{Additional Experiments on \texttt{OpenWebText2}}\label{sec:app:owt}
We also train language models on a different dataset, namely \texttt{OpenWebText2}. Across three different model sizes we find (again) that the performance of \wsd{} (with $20$\% cooldown) matches the one of \cosine{} when using a base learning rate half as big (see \cref{fig:loss-curves-owt}). For \wsd{}, the sudden drop in the loss is clearly visible across all model sizes and training lengths. Training details are identical to the \texttt{OpenWebText2} experiments of \citet{Haegele2024}.
\begin{figure}[!h]
    \centering
    \begin{subfigure}[b]{0.32\columnwidth}
    \includegraphics[width=0.99\linewidth]{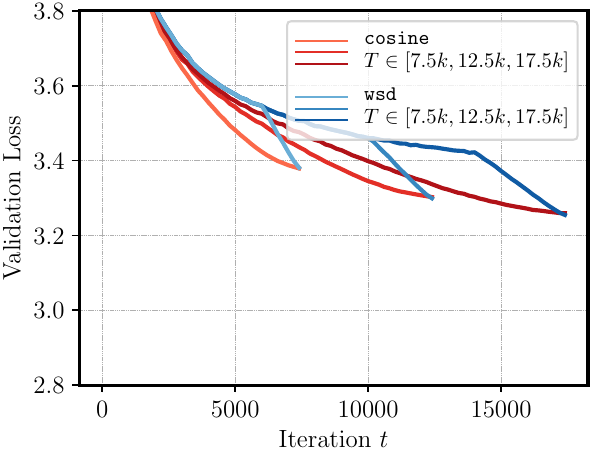}
    \caption{$60$M}
    \end{subfigure}
    \begin{subfigure}[b]{0.32\columnwidth}
    \includegraphics[width=0.99\linewidth]{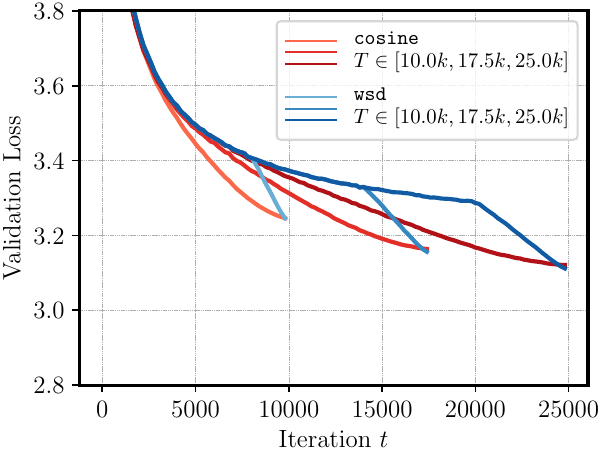}
    \caption{$93$M}
    \end{subfigure}
    \begin{subfigure}[b]{0.32\columnwidth}
    \includegraphics[width=0.99\linewidth]{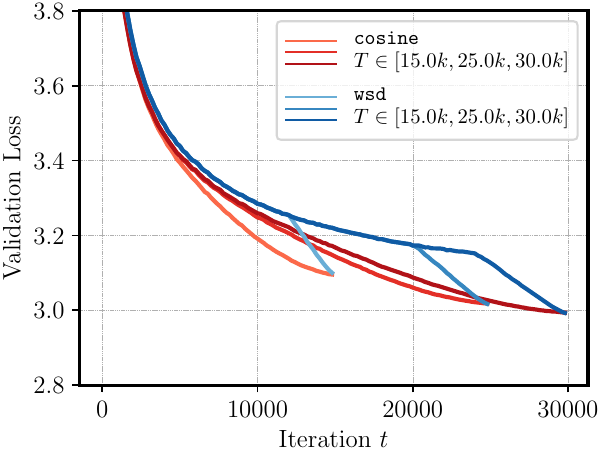}
    \caption{$166$M}
    \end{subfigure}
    \caption{Training three different model sizes on \texttt{OpenWebText2}. We observe the same characteristic drop of the loss for \wsd{}, as well as matching performance of \wsd{} and \cosine{}. In each run the base learning rate of \cosine{} is twice as large as for \wsd{}.
    }
    \label{fig:loss-curves-owt}
\end{figure}
\subsection{Computing the Bound-minimizing Schedule}\label{sec:app:optimize-schedule}
In this section, we showcase how the schedule that minimizes the bound $\Omega_T$ in \cref{thm:convex} can be computed. However, we stress that this has more of a purpose of illustration than practical applicability: in general, the gradient norms will depend on the schedule, and thus create an interdependence of the schedule and gradient norms (see also \citet{Defazio2023a}).

However, we will show that -- with constant gradient norm bounds ($G_t=1$ for all $t\in \N$) -- computing the schedule that minimizes $\Omega_T$ converges to the linear-decay schedule.
Consider $\Omega_T(\eta_{1:T})$, the right-hand side in \cref{thm:general}, as a function of the schedule $\eta_{1:T}=(\eta_1,\ldots,\eta_T)$. We are interested in computing 
\begin{align*}
    &\argmin_{\eta_{1:T} } \Omega_T(\eta_{1:T})\quad \text{subject to}~ \eta_1,\ldots,\eta_T > 0.
\end{align*}
As we can not compute the above analytically, we resort to using projected gradient descent (with momentum) in order to approximate the solution. We implement $\Omega_T(\eta_{1:T})$ in \texttt{Pytorch} which allows us to use automatic differentiation. As starting point, we use a constant schedule $\eta_t=1$ for all $t=1,\ldots,T$. \cref{fig:optimized-schedule} (left) shows the value of the bound $\Omega_T$ over the optimization trajectory. We observe convergence of the optimized schedule to a linear-decay schedule (\cref{fig:optimized-schedule}, right). Note that the shape of the schedule \emph{as well as} its scale are optimized simultaneously; we plot the optimal bound for a linear-decay schedule and see that its value matches the bound at convergence. Though we did not observe any instabilities in the optimization procedure, at this point we can not verify whether the final point is a global minimum.
\begin{figure}[!htb]
    \centering
    \includegraphics[width=0.85\textwidth]{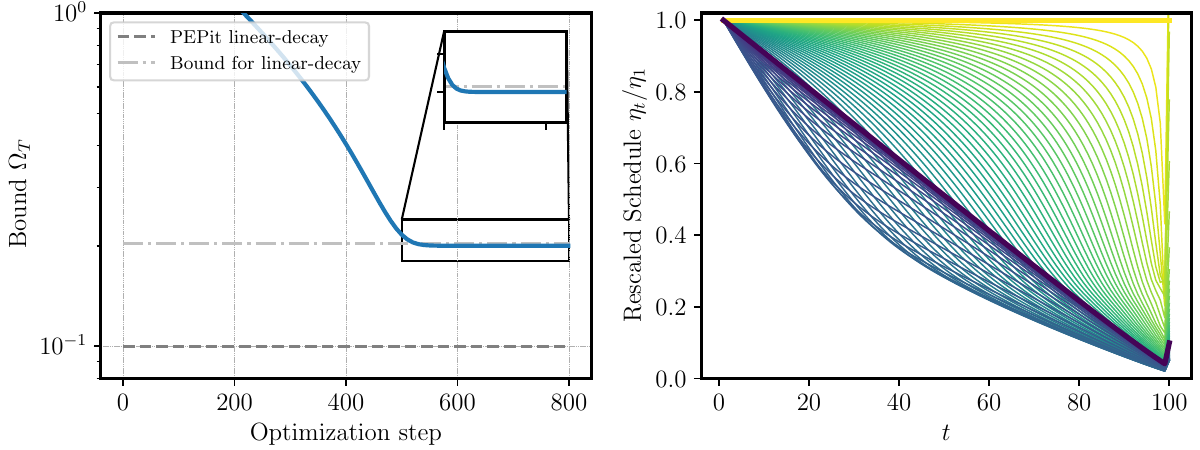}
    \caption{Optimizing the bound $\Omega_T$ with respect to the schedule $\eta_1,\ldots,\eta_T$. \textbf{(Left)} Convergence of the value of the bound, matching the bound of a linear-decay schedule. \textbf{(Right)} Optimization trajectory from constant schedule (yellow) to linear-decay (purple).}
    \label{fig:optimized-schedule}
\end{figure}
%
\clearpage

\section{Auxiliary Lemmas}\label{sec:app:auxiliary-lemmas}
\begin{lemma}[Lemma 5 from \citet{Defazio2023a}]\label{lem:qt-equality}
    Let $(q_t)$ be any sequence, and let $(w_t)$ be a positive sequence. Then, for any $T\in \N$ it holds
    \begin{align*}
        q_T = \frac{1}{\sum_{t=1}^T w_t} \sum_{t=1}^T w_tq_t + \sum_{k=1}^{T-1} \frac{w_k}{\sum_{t=k+1}^{T}w_t} \Big(\frac{1}{\sum_{t=k}^{T}w_t} \sum_{t=k}^{T}w_t(q_t-q_k) \Big).
    \end{align*}
\end{lemma}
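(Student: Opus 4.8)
The plan is to prove the identity in \cref{lem:qt-equality} by a direct telescoping-and-reindexing argument, since no inequality is involved—this is a purely algebraic equality that should follow from rearranging sums. First I would introduce the abbreviation $W_{a:b} := \sum_{t=a}^b w_t$ for partial sums of the weights, so that the target identity reads
\begin{align*}
    q_T = \frac{1}{W_{1:T}}\sum_{t=1}^T w_t q_t + \sum_{k=1}^{T-1} \frac{w_k}{W_{k+1:T}}\Big(\frac{1}{W_{k:T}}\sum_{t=k}^T w_t(q_t - q_k)\Big).
\end{align*}
The natural strategy is to rewrite $q_T$ as a telescoping sum of increments. Define the ``tail-average'' $\bar{q}_k := \frac{1}{W_{k:T}}\sum_{t=k}^T w_t q_t$ for $k=1,\ldots,T$, noting that $\bar{q}_T = q_T$ and $\bar{q}_1 = \frac{1}{W_{1:T}}\sum_{t=1}^T w_t q_t$. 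Then $q_T = \bar{q}_T = \bar{q}_1 + \sum_{k=1}^{T-1}(\bar{q}_{k+1} - \bar{q}_k)$, which already produces the first term on the right-hand side; the work is to show that each increment $\bar{q}_{k+1} - \bar{q}_k$ equals the $k$-th summand of the double sum.

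The key step is therefore to compute $\bar{q}_{k+1} - \bar{q}_k$ explicitly. Writing $\bar{q}_{k+1} = \frac{1}{W_{k+1:T}}\sum_{t=k+1}^T w_t q_t$ and $\bar{q}_k = \frac{1}{W_{k:T}}\sum_{t=k}^T w_t q_t$, I would put both over the common denominator $W_{k:T}W_{k+1:T}$ and use $W_{k:T} = w_k + W_{k+1:T}$ to simplify the numerator. The expectation is that the numerator collapses to $w_k\big(\sum_{t=k}^T w_t q_t - q_k W_{k:T}\big)$ up to the correct factors, which is exactly $w_k \sum_{t=k}^T w_t(q_t - q_k)$; dividing by the denominator then yields precisely $\frac{w_k}{W_{k+1:T}}\cdot\frac{1}{W_{k:T}}\sum_{t=k}^T w_t(q_t - q_k)$. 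A convenient bookkeeping trick is to note that $\sum_{t=k}^T w_t(q_t - q_k) = \sum_{t=k}^T w_t q_t - q_k W_{k:T}$, and likewise that the $t=k$ term $w_k(q_k - q_k)$ vanishes so the inner sum could equally start at $t=k+1$—this helps reconcile the index ranges when matching against $\bar{q}_{k+1}$.

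I expect the main obstacle to be the careful algebra of the single difference $\bar{q}_{k+1} - \bar{q}_k$: keeping the two different normalizations $W_{k:T}$ and $W_{k+1:T}$ straight and verifying that the $q_k$-dependent pieces assemble into the stated centered form $\sum_{t=k}^T w_t(q_t - q_k)$ rather than an uncentered sum. Everything else—the telescoping and the identification of the boundary term $\bar{q}_1$—is routine. An alternative, equivalent route would be to prove the identity by induction on $T$, peeling off the last index, but I judge the telescoping approach cleaner since it makes the role of each double-sum summand transparent and avoids re-deriving the whole structure at each induction step.
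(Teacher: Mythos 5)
Your proof is correct and complete: the tail-average quantities $\bar{q}_k = \frac{1}{\sum_{t=k}^T w_t}\sum_{t=k}^T w_t q_t$ satisfy $\bar q_T = q_T$, the increment $\bar q_{k+1}-\bar q_k$ collapses (using $\sum_{t=k}^T w_t = w_k + \sum_{t=k+1}^T w_t$) to exactly the $k$-th summand of the double sum, and positivity of $(w_t)$ guarantees all denominators are nonzero, so the telescoping identity gives the claim. Note that the paper itself does not prove this lemma but defers to \citet{Defazio2023a}; your argument is essentially the same weighted-tail-average telescoping used there, so it fills the gap in precisely the intended way.
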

\begin{lemma}\label{lem:summation}
    Let $l,T\in \N$ and $l\leq T$. It holds
    \begin{align*}
        \sum_{s=1}^{T+1-l} s = \tfrac12 (T+2-l)(T+1-l), \quad \sum_{s=1}^{T+1-l} s^2 = \tfrac16(2T+3-2l)(T+2-l)(T+1-l).
    \end{align*}
\end{lemma}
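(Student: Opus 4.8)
The plan is to reduce both identities to the two classical closed-form expressions for the sum of the first $n$ natural numbers and the sum of their squares, applied with the substitution $n := T+1-l$. Since $l \leq T$ with $l,T\in\N$, we have $n = T+1-l \geq 1$, so $n$ is a genuine positive integer and both sums indeed range over $s=1,\ldots,n$; this is the only point where the hypothesis $l\leq T$ is used.

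First I would record the two standard identities
\begin{align*}
    \sum_{s=1}^{n} s = \tfrac{1}{2}n(n+1), \qquad \sum_{s=1}^{n} s^2 = \tfrac{1}{6}n(n+1)(2n+1),
\end{align*}
each of which is established by a routine induction on $n$: for the base case $n=1$ both sides equal $1$, and for the inductive step one adds $n+1$ (respectively $(n+1)^2$) to the formula for $n$ and simplifies. These facts are entirely standard, so the only genuine content of the lemma is the bookkeeping of the substitution rather than any new summation argument.

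Finally I would substitute $n = T+1-l$ into each identity. For the first sum this yields $\tfrac12 (T+1-l)(T+2-l)$, which matches the stated right-hand side after reordering the two factors. For the second, I observe $n+1 = T+2-l$ and $2n+1 = 2(T+1-l)+1 = 2T+3-2l$, whence $\tfrac16 n(n+1)(2n+1) = \tfrac16 (T+1-l)(T+2-l)(2T+3-2l)$, exactly the claimed expression. The main (and essentially only) obstacle is algebraic care in performing this substitution correctly; there is no analytic or combinatorial difficulty beyond the elementary induction.
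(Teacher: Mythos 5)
Your proof is correct and complete, but it takes a different route from the paper: the paper's ``proof'' of this lemma consists solely of a pointer to WolframAlpha queries verifying the two closed forms, whereas you give a self-contained argument -- the classical identities $\sum_{s=1}^n s = \tfrac12 n(n+1)$ and $\sum_{s=1}^n s^2 = \tfrac16 n(n+1)(2n+1)$ established by induction, followed by the substitution $n = T+1-l$. Your version buys mathematical self-containment and makes explicit the one place the hypothesis $l\le T$ is used (guaranteeing $n\ge 1$ so the sums are nonempty and the induction applies); the paper's version buys brevity by outsourcing a standard computation to a computer algebra system. The substitution bookkeeping in your write-up is accurate in both cases ($n+1 = T+2-l$ and $2n+1 = 2T+3-2l$), so there is no gap.
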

\begin{proof}
    We refer to WolframAlpha:
    \href{https://www.wolframalpha.com/input?i=sum%28s%29+from+s%3D1+to+T%2B1-l}{[link to first result]}, \href{https://www.wolframalpha.com/input?i=sum%28s%5E2%29+from+s%3D1+to+T%2B1-l}{[link to second result]}.
\end{proof}
\begin{lemma}\label{len:bound-harmonic-number}
Let $t\in \N$. It holds
    \begin{align*}
        \ln(t) \leq \ln(t+1) \leq \int_{0}^{t} \frac{1}{s+1} ds  \leq \sum_{s=1}^t \frac{1}{s} =  H_t \leq 1+\ln(t).
    \end{align*}
\end{lemma}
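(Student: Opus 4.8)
The plan is to read the displayed chain as four separate links and dispatch each by elementary monotonicity together with the integral-comparison (integral test) technique applied to the decreasing function $s \mapsto 1/s$.

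The first link, $\ln(t) \leq \ln(t+1)$, is immediate from monotonicity of the logarithm and $t \leq t+1$. The second link is in fact an \emph{equality}: evaluating the antiderivative gives $\int_0^t \frac{1}{s+1}\,ds = \ln(s+1)\big|_0^t = \ln(t+1)$, so nothing is lost there. Hence the substance of the lemma lies entirely in the two-sided comparison of the integral (equivalently, of $\ln$) with the partial sum $H_t$.

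For $\int_0^t \frac{1}{s+1}\,ds \leq H_t$, I would split the integral over the unit intervals $[k,k+1]$ for $k=0,\dots,t-1$. On each interval the integrand $\frac{1}{s+1}$ is decreasing, hence bounded above by its left-endpoint value $\frac{1}{k+1}$; integrating and summing gives $\int_0^t \frac{1}{s+1}\,ds \leq \sum_{k=0}^{t-1}\frac{1}{k+1} = \sum_{j=1}^t \frac{1}{j} = H_t$ after the reindexing $j=k+1$. For the final link $H_t \leq 1 + \ln(t)$, I would peel off the first term and compare each remaining summand downward: for $s \geq 2$ the monotonicity of $\frac{1}{u}$ gives $\frac{1}{s} \leq \int_{s-1}^s \frac{1}{u}\,du$, so summing over $s=2,\dots,t$ yields $\sum_{s=2}^t \frac{1}{s} \leq \int_1^t \frac{1}{u}\,du = \ln(t)$, and adding back the $s=1$ term gives $H_t \leq 1 + \ln(t)$.

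Since every step is a one-line estimate, there is no deep obstacle; the only points demanding care are the bookkeeping of the index shifts and interval endpoints (using the left-endpoint bound for the lower comparison and the whole-interval bound for the upper comparison), and the degenerate base case $t=1$, where $\sum_{s=2}^t$ is an empty sum and one checks directly that $H_1 = 1 = 1 + \ln(1)$ so the whole chain still holds.
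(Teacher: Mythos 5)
Your proof is correct in every step: the equality $\int_0^t \tfrac{1}{s+1}\,ds = \ln(t+1)$, the left-endpoint comparison giving $\int_0^t \tfrac{1}{s+1}\,ds \leq H_t$, and the peel-off-the-first-term comparison giving $H_t \leq 1+\ln(t)$ are all sound, including the degenerate case $t=1$. The paper itself states this lemma without any proof (treating it as a standard integral-test fact), so your argument is exactly the canonical one a reader would be expected to supply; there is nothing to reconcile against.
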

%
\section{Missing Proofs}\label{sec:app:proofs}
%
The following lemma will be the basic inequality for subsequently proving \cref{thm:general}; it is a standard result in the online learning and convex optimization literature \citep{Zinkevich2003}.
\begin{lemma}\label{lem:basic-ineq}
    Let $1\leq k\leq T$ and let $u\in \R^d$ be measurable with respect to $x_k$. It holds
    \begin{align}
        \sum_{t=k}^T \eta_t \E[f(x_t) - f(u)] \leq
        \frac12 \E\|x_k - u\|^2 + \frac12 \sum_{t=k}^T \eta_t^2 \E\|g_t\|^2.
    \end{align}
\end{lemma}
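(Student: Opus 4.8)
The plan is to prove a one-step inequality, telescope it over $t$, and only then pass to expectations. First I would fix an index $t$ with $k \le t \le T$ and, writing the iteration as $x_{t+1} = x_t - \eta_t g_t$ (the base learning-rate $\gamma$ is folded into $\eta_t$ for this lemma and reinstated afterwards via the substitution $\eta_t \mapsto \gamma\eta_t$), expand the squared distance to $u$:
\begin{align*}
    \|x_{t+1} - u\|^2 = \|x_t - u\|^2 - 2\eta_t \iprod{g_t}{x_t - u} + \eta_t^2 \|g_t\|^2.
\end{align*}
Convexity \ref{item:asum:convexity} applied to the sample loss $f(\cdot,s_t)$ with $g_t \in \partial f(x_t,s_t)$ gives $\iprod{g_t}{x_t - u} \ge f(x_t,s_t) - f(u,s_t)$, and rearranging yields the pathwise descent bound
\begin{align*}
    2\eta_t\big(f(x_t,s_t) - f(u,s_t)\big) \le \|x_t - u\|^2 - \|x_{t+1} - u\|^2 + \eta_t^2 \|g_t\|^2.
\end{align*}

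Summing this over $t = k,\dots,T$, the distance terms telescope to $\|x_k - u\|^2 - \|x_{T+1} - u\|^2$; dropping the nonnegative term $\|x_{T+1}-u\|^2$ gives
\begin{align*}
    2\sum_{t=k}^T \eta_t\big(f(x_t,s_t) - f(u,s_t)\big) \le \|x_k - u\|^2 + \sum_{t=k}^T \eta_t^2 \|g_t\|^2.
\end{align*}
This holds pathwise, so I would then take total expectations of both sides.

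The one genuinely careful step is converting the per-sample gaps $f(x_t,s_t)-f(u,s_t)$ into the full-objective gaps $f(x_t)-f(u)$. To this end I would introduce the filtration $\mathcal{F}_t := \sigma(s_1,\dots,s_{t-1})$, under which $x_t$ is $\mathcal{F}_t$-measurable and, since $u$ is measurable with respect to $x_k$ and $k \le t$, so is $u$. With the samples drawn independently, $s_t$ is independent of $\mathcal{F}_t$, so by the defining identity $f(\cdot) = \E_s[f(\cdot,s)]$ we get $\E[f(x_t,s_t)\mid\mathcal{F}_t] = f(x_t)$ and $\E[f(u,s_t)\mid\mathcal{F}_t] = f(u)$. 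The tower rule then gives $\E[f(x_t,s_t)-f(u,s_t)] = \E[f(x_t)-f(u)]$ for each $t \ge k$, and dividing the expected inequality by $2$ produces the stated bound. I expect this measure-theoretic bookkeeping — checking that conditioning on $\mathcal{F}_t$ leaves $s_t$ free, so the sample loss is unbiased for $f$ at both $x_t$ and $u$ — to be the only non-routine point; the convexity and telescoping steps are entirely standard.
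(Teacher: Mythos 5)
Your proof is correct and follows essentially the same route as the paper's: expand $\|x_{t+1}-u\|^2$, apply convexity to the inner product, telescope, and use conditional expectation (with $u$ and $x_t$ measurable with respect to the past and $s_t$ independent of it) to replace the sample losses by $f$. The only difference is ordering --- you telescope pathwise and invoke the tower rule once at the end, whereas the paper takes conditional and total expectations step by step before summing --- which is immaterial.
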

\begin{proof}
    From the update rule \eqref{eqn:update} and property \eqref{eqn:alpha-beta} we obtain
    \begin{align*}
        \|x_{t+1} - u \|^2 &= \|x_t-u\|^2 - 2\eta_t\iprod{g_t}{x_t-u} + \eta_t^2\|g_t\|^2 \\
        &\leq \|x_{t} - u \|^2 - 2\eta_t\big[f(x_t,s_t)-f(u,s_t)\big] + \eta_t^2\|g_t\|^2.
    \end{align*}
    Apply conditional expectation (conditioned on $t$) to obtain
    \begin{align*}
        \E\|x_{t+1} - u \|^2 \leq \|x_{t} - u \|^2 - 2\eta_t\big[f(x_t)-f(u)\big] + \eta_t^2 \E\|g_t\|^2.
    \end{align*}
    Apply total expectation (with respect to $t=1,\ldots,T$) and rearrange to obtain
    \begin{align*}
        \eta_t \E[f(x_t) - f(u)]  \leq
        \tfrac12 \E\|x_{t} - u \|^2-\tfrac12\E\|x_{t+1} - u \|^2 + \frac{\eta_t^2}{2} \E\|g_t\|^2.
    \end{align*}
    Sum from $t=k,\ldots,T$ to obtain the final result:
    \begin{align*}
        \sum_{t=k}^T \eta_t \E[f(x_t) - f(u)] \leq
        \frac12 \E\|x_k - u\|^2 + \frac12 \sum_{t=k}^T \eta_t^2 \E\|g_t\|^2.
    \end{align*}
\end{proof}
\begin{theorem}[Thm.\ 10 from \citet{Defazio2023a}]\label{thm:general}
    Let the iterates $(x_t)$ be given by \eqref{eqn:update} with $\eta_t > 0$ for $t=1,\ldots,T$.
    Let $x_\star \in \R^d$ and $D:= \|x_1 - x_\star\|$.
    Then, it holds
    \begin{align*}
        &\E[f(x_T) - f(x_\star)] \leq \frac{D^2}{2\sum_{t=1}^T \eta_t} + \frac{ \sum_{t=1}^T \eta_t^2 \E\|g_t\|^2}{2\sum_{t=1}^T \eta_t} \\
        &\hspace{17ex} + \frac{1}{2}\sum_{k=1}^{T-1} \frac{\eta_k}{\sum_{t=k+1}^{T}\eta_t} \Big(\frac{1}{\sum_{t=k}^{T}\eta_t} \sum_{t=k}^T \eta_t^2 \E\|g_t\|^2\Big).
    \end{align*}
\end{theorem}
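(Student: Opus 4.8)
The plan is to derive the stated formula by feeding the scalar suboptimality sequence into the exact summation identity of \cref{lem:qt-equality} and then evaluating the resulting partial sums with the basic inequality of \cref{lem:basic-ineq}. The identity supplies the precise algebraic skeleton---a weighted average plus the characteristic double sum---while the basic inequality turns each partial sum of function-value gaps into the gradient-norm expressions that appear on the right-hand side.

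First I would set $q_t := \E[f(x_t) - f(x_\star)]$ and $w_t := \eta_t$ in \cref{lem:qt-equality}. Since $w_t = \eta_t > 0$, the hypotheses are met, and the identity rewrites $q_T = \E[f(x_T) - f(x_\star)]$ exactly as $\frac{1}{\sum_{t=1}^T \eta_t}\sum_{t=1}^T \eta_t q_t$ plus $\sum_{k=1}^{T-1}\frac{\eta_k}{\sum_{t=k+1}^T \eta_t}\big(\frac{1}{\sum_{t=k}^T \eta_t}\sum_{t=k}^T \eta_t(q_t - q_k)\big)$. The key simplification is that the common shift by $f(x_\star)$ cancels inside each difference, so $q_t - q_k = \E[f(x_t) - f(x_k)]$; the inner sums therefore involve only consecutive function-value gaps and no longer reference $x_\star$.

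Next I would evaluate the two kinds of partial sums using \cref{lem:basic-ineq}. For the leading weighted-average term I invoke the lemma with $k = 1$ and the deterministic anchor $u = x_\star$, which supplies $\tfrac12 D^2 + \tfrac12\sum_{t=1}^T \eta_t^2 \E\|g_t\|^2$ for $\sum_{t=1}^T \eta_t q_t$; dividing by $\sum_{t=1}^T \eta_t$ produces the first two summands $\frac{D^2}{2\sum_{t=1}^T \eta_t}$ and $\frac{\sum_{t=1}^T \eta_t^2 \E\|g_t\|^2}{2\sum_{t=1}^T \eta_t}$. For the $k$-th inner sum I invoke the lemma with the same starting index $k$ but the \emph{random} anchor $u = x_k$; here $\tfrac12\E\|x_k - x_k\|^2 = 0$, so the lemma supplies $\tfrac12\sum_{t=k}^T \eta_t^2 \E\|g_t\|^2$ for $\sum_{t=k}^T \eta_t \E[f(x_t) - f(x_k)]$, and inserting this into the bracket reproduces the final double-sum term.

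The step I expect to be most delicate is this inner application with the random comparison point $u = x_k$: one must check that $x_k$ satisfies the measurability requirement of \cref{lem:basic-ineq} (it is measurable with respect to itself), that the conditional-expectation argument underlying the lemma is unaffected by anchoring at an iterate rather than at a fixed point, and that the index ranges align so that the distance term vanishes in every inner application while the $D^2$ contribution survives exactly once, namely in the $k=1$ case with $u = x_\star$. Once these bookkeeping points are verified, collecting the three contributions assembles precisely into the displayed right-hand side.
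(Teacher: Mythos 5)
Your proposal is correct and takes essentially the same route as the paper's own proof: it combines \cref{lem:qt-equality} applied to $q_t = \E[f(x_t)-f(x_\star)]$, $w_t=\eta_t$ with \cref{lem:basic-ineq} invoked once with $u=x_\star$, $k=1$ (yielding the $D^2$ and first gradient term) and once per inner sum with the random anchor $u=x_k$, where $\tfrac12\E\|x_k-x_k\|^2=0$ kills the distance term, exactly as in the paper. The measurability caveat you flag is precisely the hypothesis built into \cref{lem:basic-ineq}, so nothing further is needed; note only that the derivation yields an inequality $\leq$ (as the paper's proof itself concludes), the equality sign in the theorem statement being an artifact.
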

\begin{remark}~\
    \begin{enumerate}[label=(\roman*)]
        \item Note that the result of \cref{thm:general} is an \textit{anytime} result, in the sense that we can evaluate the right-hand side at any $T$ without the knowledge of $\eta_t$ for $t > T$.
        \item A technical artifact of \cref{thm:general} is that $x_T$ does not depend on $\eta_T$ by definition, however $\eta_T$ appears in the bound on the right-hand side. This is standard in the analysis of subgradient methods: in the proof, we bound $-\E\|x_{T+1}-x_\star\|^2 \leq 0$. If one carries through this term to the end, then we obtain multiple terms in the bound that depend on $\eta_T$.
    \end{enumerate}
\end{remark}

\cref{thm:convex} follows from applying \cref{thm:general} with $\hat{\eta}_t := \gamma \eta_t$.
We finally prove \cref{thm:general}.
\begin{proof}
    First, apply \cref{lem:basic-ineq} with $u\to x_\star$ and $k\to 1$ to obtain
    \begin{align}
        \sum_{t=1}^T \eta_t \E[f(x_t) - f(x_\star)] \leq
        \frac12 D^2 + \frac12 \sum_{t=1}^T \eta_t^2 \E\|g_t\|^2.
    \end{align}
    Define $q_t := \E[f(x_t) - f(x_\star)]$. Dividing by $\sum_{t=1}^T \eta_t$ gives
    \begin{align}\label{eqn:first-term-bound}
        \frac{1}{\sum_{t=1}^T \eta_t} \sum_{t=1}^T \eta_t q_t \leq
        \frac{D^2}{2\sum_{t=1}^T \eta_t} + \frac{ \sum_{t=1}^T \eta_t^2 \E\|g_t\|^2}{2\sum_{t=1}^T \eta_t}.
    \end{align}
    In order to apply \cref{lem:qt-equality} with $w_t\to \eta_t$ we need to bound the term
    \begin{align}\label{eqn:qt-qk-term}
        \sum_{t=k}^{T}\eta_t(q_t-q_k) = \sum_{t=k}^{T}\eta_t\E[f(x_t)-f(x_k)].
    \end{align}
    Thus, apply \cref{lem:basic-ineq} with $u\to x_k$ to obtain
    \begin{align*}
         \frac{1}{\sum_{t=k}^{T}\eta_t}\sum_{t=k}^T \eta_t [q_t-q_k] \leq \frac{1}{2\sum_{t=k}^{T}\eta_t} \sum_{t=k}^T \eta_t^2 \E\|g_t\|^2.
    \end{align*}
    Now, combine \cref{lem:qt-equality} with \eqref{eqn:first-term-bound} and \eqref{eqn:qt-qk-term} to get
    \begin{align*}
        &\hspace{-20ex}\E[f(x_T) - f(x_\star)] = q_T \leq \frac{D^2}{2\sum_{t=1}^T \eta_t} + \frac{ \sum_{t=1}^T \eta_t^2 \E\|g_t\|^2}{2\sum_{t=1}^T \eta_t} \\
        &\quad + \frac{1}{2}\sum_{k=1}^{T-1} \frac{\eta_k}{\sum_{t=k+1}^{T}\eta_t} \Big(\frac{1}{\sum_{t=k}^{T}\eta_t} \sum_{t=k}^T \eta_t^2 \E\|g_t\|^2\Big).
    \end{align*}
\end{proof}
%
\clearpage
\section{Mirror Descent Analysis}\label{sec:app:mirror-descent}
\newcommand{\breg}{B_{\psi}}
In this section, we extend the bound from \cref{thm:general} to the stochastic mirror-descent method.

\paragraph{Notation.} \emph{In this section only}, we denote with $\|\cdot\|$ an arbitrary norm (in contrast to the rest of the paper, where it denotes the standard Euclidean norm), and let $\|\cdot\|_*$ denote its dual norm, defined by $\|x\|_* := \sup_{z \in \R^d: \|z\| \leq 1} \iprod{x}{z}$.

Let $\psi: \R^d \to \R$ be a continuously differentiable function that is $\mu$-strongly convex with respect to $\|\cdot\|$. Define the Bregman divergence as 
\begin{align*}
    \breg(x, y) := \psi(x) - \psi(y) - \iprod{x-y}{\nabla \psi(y)}.
\end{align*}
It follows $\breg(x,y) \geq \frac{\mu}{2}\|x-y\|^2$ from strong convexity of $\psi$.
Further, we will need the following three-point-identity \citep[Lem.\ 4.1]{Beck2003}: for any $x,y,z \in \R^d$ it holds
\begin{align}\label{eqn:bregman-three-point}
    \breg(z,x) + \breg(x,y) -\breg(z,y) = \iprod{\nabla \psi(y) - \nabla \psi(x)}{z-x}.
\end{align}
Now, the iterates of (stochastic) mirror descent are given by: for $\eta_t > 0$ and $g_t \in \partial f(x_t, s_t)$, compute
\begin{align}\label{eqn:mirror-update}
    x_{t+1} = \argmin_{y\in \R^d} \eta_t\iprod{g_t}{y-x_t} + \breg(y, x_t).
\end{align}

We will now prove a mirror descent version of \cref{thm:general} (in fact, \cref{thm:general} is a special case with $\breg(x,y) = \frac{1}{2}\|x-y\|^2$). To do so, we first follow standard steps in mirror-descent analysis \citep{Beck2003} to obtain the basic inequality in \cref{lem:mirror-basic-ineq}. In contrast to the classical mirror-descent analysis, we use this to prove a \emph{last-iterate} bound in \cref{thm:mirror-general}.

\begin{lemma}\label{lem:mirror-basic-ineq}
    Let the iterates $(x_t)$ be generated by \eqref{eqn:mirror-update}. Let $1\leq k\leq T$ and let $u\in \R^d$ be measurable with respect to $x_k$. It holds
    \begin{align}
        \sum_{t=k}^T \eta_t \E[f(x_t) - f(u)] \leq
        \E[\breg(u, x_k)] + \frac{1}{2\mu}\sum_{t=k}^T \eta_t^2 \E \|g_t\|_*^2.
    \end{align} 
\end{lemma}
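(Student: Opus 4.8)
```latex
\textbf{Proof plan.} The goal is to establish the mirror-descent analogue of the basic inequality in \cref{lem:basic-ineq}, following the classical mirror-descent argument but retaining enough generality to sum from an arbitrary starting index $k$. The plan is to proceed in three stages: (i) extract a one-step descent inequality from the optimality condition of the proximal update \eqref{eqn:mirror-update}; (ii) combine this with convexity of $f(\cdot,s_t)$ and the three-point identity \eqref{eqn:bregman-three-point} to turn one step into a telescoping relation in the Bregman divergence; and (iii) take expectations and sum over $t=k,\ldots,T$.

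\textbf{Step (i): one-step inequality.} First I would write down the first-order optimality condition for the minimization in \eqref{eqn:mirror-update}. Since $x_{t+1}$ minimizes $y \mapsto \eta_t \iprod{g_t}{y - x_t} + \breg(y, x_t)$, and $\nabla_y \breg(y,x_t) = \nabla\psi(y) - \nabla\psi(x_t)$, the stationarity condition gives
\begin{align*}
    \eta_t g_t + \nabla\psi(x_{t+1}) - \nabla\psi(x_t) = 0.
\end{align*}
This yields $\iprod{\nabla\psi(x_t) - \nabla\psi(x_{t+1})}{u - x_{t+1}} = \eta_t \iprod{g_t}{u - x_{t+1}}$ for any $u$, and the left-hand side is exactly the quantity appearing in the three-point identity \eqref{eqn:bregman-three-point} with $z\to u$, $y \to x_t$, $x \to x_{t+1}$.

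\textbf{Step (ii): convexity and the three-point identity.} Next I would apply \eqref{eqn:bregman-three-point} to rewrite $\eta_t\iprod{g_t}{u - x_{t+1}} = \breg(u,x_t) - \breg(u,x_{t+1}) - \breg(x_{t+1},x_t)$. Splitting $\iprod{g_t}{u-x_{t+1}} = \iprod{g_t}{u - x_t} + \iprod{g_t}{x_t - x_{t+1}}$ and using convexity \ref{item:asum:convexity} in the form $\eta_t\iprod{g_t}{u - x_t} \leq \eta_t[f(u,s_t) - f(x_t,s_t)]$, I would rearrange to
\begin{align*}
    \eta_t[f(x_t,s_t) - f(u,s_t)] \leq \breg(u,x_t) - \breg(u,x_{t+1}) - \breg(x_{t+1},x_t) + \eta_t\iprod{g_t}{x_t - x_{t+1}}.
\end{align*}
The remaining task is to absorb the two terms $-\breg(x_{t+1},x_t) + \eta_t\iprod{g_t}{x_t - x_{t+1}}$. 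Using strong convexity $\breg(x_{t+1},x_t) \geq \frac{\mu}{2}\|x_{t+1}-x_t\|^2$, Hölder's inequality $\iprod{g_t}{x_t - x_{t+1}} \leq \|g_t\|_* \|x_t - x_{t+1}\|$, and Young's inequality $ab \leq \frac{\mu}{2}a^2 + \frac{1}{2\mu}b^2$, these two terms are bounded above by $\frac{\eta_t^2}{2\mu}\|g_t\|_*^2$. This gives the one-step bound $\eta_t[f(x_t,s_t) - f(u,s_t)] \leq \breg(u,x_t) - \breg(u,x_{t+1}) + \frac{\eta_t^2}{2\mu}\|g_t\|_*^2$.

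\textbf{Step (iii): expectation and summation.} Finally I would take conditional expectation (so that $f(x_t,s_t)$ becomes $f(x_t)$ and $f(u,s_t)$ becomes $f(u)$, using measurability of $u$ with respect to $x_k$), then total expectation, and sum the telescoping Bregman terms from $t=k$ to $T$. The sum $\sum_{t=k}^T [\breg(u,x_t) - \breg(u,x_{t+1})]$ collapses to $\breg(u,x_k) - \breg(u,x_{T+1}) \leq \breg(u,x_k)$, yielding precisely the claimed inequality. The main obstacle, analogous to the Euclidean case, is the careful bookkeeping in step (ii) to control the cross term via the strong convexity of $\psi$ rather than by brute-force expansion; the rest is the standard telescoping argument inherited directly from the proof of \cref{thm:general}.
```
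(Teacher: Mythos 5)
Your proof is correct and follows essentially the same route as the paper's: the optimality condition $\eta_t g_t = \nabla\psi(x_t) - \nabla\psi(x_{t+1})$, the three-point identity, convexity, absorption of the cross term $\iprod{g_t}{x_t - x_{t+1}}$ via strong convexity of $\psi$ together with H\"older and Young, and telescoping after taking expectations. One cosmetic slip: the intermediate identity should read $\eta_t\iprod{g_t}{x_{t+1}-u} = B_{\psi}(u,x_t) - B_{\psi}(u,x_{t+1}) - B_{\psi}(x_{t+1},x_t)$ (you wrote $u-x_{t+1}$ in the inner product, which flips the sign), but the one-step inequality you subsequently state is the correct one, so nothing downstream is affected.
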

\begin{proof}
For fixed $y$, we have $\nabla_x \breg(x,y) = \nabla \psi(x) - \nabla \psi(y)$. Thus, optimality conditions of \eqref{eqn:mirror-update} are 
\begin{align}\label{eqn:mirror-opt-cond}
    0 = \eta_t g_t + \nabla \psi(x_{t+1}) - \nabla \psi(x_t).
\end{align}
Then, we have 
\begin{align*}
    &\eta_t[f(x_t, s_t) - f(u, s_t)] \leq \eta_t \iprod{x_t-u}{g_t} \\
    &= \underbracket{\iprod{u - x_{t+1}}{\nabla \psi(x_t)-\nabla \psi(x_{t+1}) -\eta_t g_t}}_{:= s_1}  +
     \underbracket{\iprod{u - x_{t+1}}{\nabla \psi(x_{t+1}) - \nabla \psi(x_t)}}_{:= s_2}  + \\
    & \quad \underbracket{\iprod{x_t - x_{t+1}}{\eta_t g_t}}_{:= s_3}.
\end{align*}
From \eqref{eqn:mirror-opt-cond}, we have $s_1=0$. From \eqref{eqn:bregman-three-point}, we have $s_2 = \breg(u, x_t) - \breg(u, x_{t+1}) - \breg(x_{t+1}, x_t)$.
From the (generalized) Cauchy-Schwarz inequality combined with Young's inequality, we have
\begin{align*}
    s_3 \leq \frac{\mu}{2}\|x_{t+1} - x_t\|^2 + \frac{\eta_t^2}{2\mu}\|g_t\|_*^2.
\end{align*}
Using that $- \breg(x_{t+1}, x_t) \leq - \frac{\mu}{2}\|x_{t+1} - x_t\|^2$, we obtain
\begin{align*}
    \eta_t[f(x_t, s_t) - f(u, s_t)] \leq \breg(u, x_t) - \breg(u, x_{t+1}) + \frac{\eta_t^2}{2\mu}\|g_t\|_*^2. 
\end{align*}
Taking conditional expectation, we have $\E[f(x_t, s_t) - f(u, s_t)] = f(x_t) - f(u)$.
Finally, rearrange, take total expectation and sum from $t=k,\ldots,T$. Using that $\breg(u, x_{T+1}) \geq 0$, we obtain
\begin{align*}
    \sum_{t=k}^T \eta_t \E[f(x_t) - f(u)] \leq \E[\breg(u, x_k)] + \frac{1}{2\mu}\sum_{t=k}^T \eta_t^2 \E \|g_t\|_*^2.
\end{align*}
\end{proof}
Now, repeating the proof of \cref{thm:general}, but applying \cref{lem:mirror-basic-ineq} instead of \cref{lem:basic-ineq}, we obtain the following bound.
\begin{theorem}\label{thm:mirror-general}
    Let the iterates $(x_t)$ be given by stochastic mirror descent \eqref{eqn:mirror-update} with $\eta_t > 0$ for $t=1,\ldots,T$.
    Let $x_\star \in \R^d$.
    Then, it holds
    \begin{align*}
        &\E[f(x_T) - f(x_\star)] = \frac{\E[\breg(x_\star, x_1)]}{\sum_{t=1}^T \eta_t} + \frac{ \sum_{t=1}^T \eta_t^2 \E\|g_t\|_*^2}{2\mu \sum_{t=1}^T \eta_t} \\
        &\hspace{17ex} + \frac{1}{2\mu}\sum_{k=1}^{T-1} \frac{\eta_k}{\sum_{t=k+1}^{T}\eta_t} \Big(\frac{1}{\sum_{t=k}^{T}\eta_t} \sum_{t=k}^T \eta_t^2 \E\|g_t\|_*^2\Big).
    \end{align*}
\end{theorem}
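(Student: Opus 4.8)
The plan is to repeat the proof of \cref{thm:general} essentially verbatim, substituting the mirror-descent basic inequality \cref{lem:mirror-basic-ineq} wherever the Euclidean \cref{lem:basic-ineq} was invoked. Since \cref{lem:mirror-basic-ineq} already absorbs all the mirror-descent-specific work (the optimality conditions of the proximal update, the three-point identity \eqref{eqn:bregman-three-point}, and the Young/generalized-Cauchy--Schwarz estimate that produces the $\tfrac{1}{2\mu}\|g_t\|_*^2$ term via strong convexity), what remains is a purely algebraic recombination driven by the exact identity \cref{lem:qt-equality}.

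Concretely, I would first apply \cref{lem:mirror-basic-ineq} with $u\to x_\star$ and $k\to 1$, giving
\begin{align*}
    \sum_{t=1}^T \eta_t\, \E[f(x_t)-f(x_\star)] \leq \E[\breg(x_\star, x_1)] + \frac{1}{2\mu}\sum_{t=1}^T \eta_t^2\, \E\|g_t\|_*^2.
\end{align*}
Setting $q_t := \E[f(x_t)-f(x_\star)]$ and dividing by $\sum_{t=1}^T \eta_t$ produces exactly the first two terms of the claimed formula. To prepare \cref{lem:qt-equality} with weights $w_t\to\eta_t$, I then need to control the cross-terms $\sum_{t=k}^T \eta_t(q_t-q_k) = \sum_{t=k}^T \eta_t\,\E[f(x_t)-f(x_k)]$. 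Applying \cref{lem:mirror-basic-ineq} a second time, now with $u\to x_k$, yields
\begin{align*}
    \frac{1}{\sum_{t=k}^T \eta_t}\sum_{t=k}^T \eta_t[q_t-q_k] \leq \frac{1}{2\mu \sum_{t=k}^T \eta_t}\sum_{t=k}^T \eta_t^2\, \E\|g_t\|_*^2,
\end{align*}
where the Bregman term drops out because $\breg(x_k,x_k)=0$ by definition. Substituting both estimates into the identity $q_T = \tfrac{1}{\sum \eta_t}\sum \eta_t q_t + \sum_{k=1}^{T-1}\tfrac{\eta_k}{\sum_{t=k+1}^T \eta_t}(\cdots)$ of \cref{lem:qt-equality} then assembles the three-term bound; the positivity of the outer weights $\tfrac{\eta_k}{\sum_{t=k+1}^T \eta_t}$ guarantees that replacing each inner bracket by its upper bound preserves the inequality.

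I do not anticipate a genuine obstacle, since the mirror-descent difficulty has been isolated into \cref{lem:mirror-basic-ineq}. The one point requiring care is the second invocation: \cref{lem:mirror-basic-ineq} requires the reference point $u$ to be measurable with respect to $x_k$, which is exactly met by $u=x_k$, so the conditional-expectation steps inside the lemma remain valid. The vanishing $\breg(x_k,x_k)=0$ is the precise analogue of the $\onehalf\|x_k-x_k\|^2=0$ cancellation in the Euclidean proof, and it is what keeps the cross-term free of any distance contribution; once this is noted, the remaining manipulation is mechanical.
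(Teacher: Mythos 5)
Your proof is correct and is essentially identical to the paper's: the paper obtains \cref{thm:mirror-general} precisely by repeating the proof of \cref{thm:general} with \cref{lem:mirror-basic-ineq} substituted for \cref{lem:basic-ineq}, applied once with $u\to x_\star$, $k\to 1$ and once with $u\to x_k$ (where $B_{\psi}(x_k,x_k)=0$ plays the role of $\tfrac12\|x_k-x_k\|^2=0$), and then assembled via \cref{lem:qt-equality}. Nothing is missing from your argument.
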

\clearpage
\section{Analysis of \wsd{} schedule} \label{sec:app:proof-wsd-bound}
We first state \cref{thm:wsd-bound-shortened} in its full version.
\begin{theorem}\label{thm:wsd-bound}
    Let $1 \leq T_0 < T$. Assume that $\eta_t =1$ for $t<T_0$ and $\eta_t = 1-\frac{t-T_0}{T+1-T_0}$ for $T \geq t\geq T_0$. 
    Further, assume that \ref{item:asum:lipschitz} holds with $G_t=G$ for some $G>0$ for all $t\in \N$.
    Then, for $\gamma=\gamma^\star$ from \cref{cor:optimal-base-lr}, we have 
    \begin{align*}
        &\hspace{0ex}\E[f(x_T)-f(x_\star)] \leq \\
        &DG\sqrt{\frac{4}{T+T_0}\Bigg[ \tfrac{2}{3} +\tfrac{T+2T_0}{3(T+T_0)} + H_{T+T_0-2} - H_{T-T_0+1} - \tfrac{(T-T_0)(T_0-1)}{3(T-T_0+2)(T+T_0)} + \tfrac{1}{(T-T_0)^2} + \tfrac{H_{T-T_0-1}}{T-T_0+1} \Bigg]}.
    \end{align*}
\end{theorem}
Note that for large $T$, we have $H_{T-T_0-1} = \mathcal{O}(\ln(T-T_0-1)) = o(T-T_0+1)$. Thus, the last two terms can be summarized with $o(1)$ as $T \to \infty$.

In this section, we give further interpretation the bound in \cref{thm:wsd-bound} in comparison to the constant and linear-decay schedules.

In \cref{sec:wsd-bound}, we derived that for large $T$ and $T_0=\beta T$, the bound for \wsd{} is approximately
\begin{align*}
    \E[f(x_T)-f(x_\star)] \precapprox \frac{DG}{\sqrt{T}} \cdot \sqrt{\frac{4}{1+\beta}\big[\tfrac{2}{3} + \tfrac{1+2\beta}{3(1+\beta)} - \tfrac{\beta}{3(1+\beta)} + H_{(1+\beta)T-2} - H_{(1-\beta)T+1}\big]}.
\end{align*}
To obtain concrete numbers, plugging in $\beta=0.8$ (that is, 20\% cooldown) for \wsd{}, and obtain
\begin{align*}
    \E[f(x_T)-f(x_\star)] \precapprox \frac{DG}{\sqrt{T}} \cdot \sqrt{0.9 + 2.2 (H_{1.8T-2} - H_{0.2T+1}) }.
\end{align*}
For example, if $T=10^5$, then $2.2(H_{1.8T-2} - H_{0.2T+1}) \approx 4.39$. 
In comparison, we have:
\begin{itemize}
    \item constant schedule: the bound is $\frac{DG}{\sqrt{T}} \cdot \sqrt{1+H_{T-1}}$.
    \item linear-decay schedule: the same bound from \cref{cor:optimal-base-lr} results in $(2+\frac{H_{T-1}-2/3}{T+1})\frac{DG}{\sqrt{T}}$ \citep[Thm.\ 13]{Defazio2023a}. However, with a different (but less general) proof technique one can show the tighter bound $\frac{DG}{\sqrt{T}}$ \citep[Cor.\ 2]{Defazio2023a}, which is actually worst-case optimal for convex, Lipschitz problems \citep{Zamani2023}.
\end{itemize}
Again for $T=10^5$, we have $H_{T-1} \approx 12.09$ and $(2+\frac{H_{T-1}-2/3}{T+1}) \approx 2.0001$. In conclusion, for this specific $T$ the constant of the bound is roughly twice for \wsd{} compared to linear-decay, and $1/3$ compare to a constant schedule.

Finally, we prove \cref{thm:wsd-bound}. 
\begin{proof}[Proof of \cref{thm:wsd-bound}]
From \cref{cor:optimal-base-lr}, we have
\begin{align*}
    \E[f(x_T)-f(x_\star)] \leq 2\sqrt{\mathcal{T}_1(\eta_{1:T}, D, T)\mathcal{T}_2(\eta_{1:T}, G_{1:T}, T)}.
\end{align*}
Thus, the rest of the proof will compute an upper bound of the right-hand side.
First, for $1\leq l \leq T$, we compute
\begin{align*}
    l\geq T_0: \quad \sum_{t=l}^T \eta_t = \tfrac{1}{T+1-T_0}\sum_{t=l}^T (T+1-t) = \tfrac{1}{T+1-T_0}\sum_{s=1}^{T+1-l} s = \frac{(T+2-l)(T+1-l)}{2(T+1-T_0)}.
\end{align*}
Here we made the change of variable $T+1-t \to s$ and used \cref{lem:summation}.
Similarly, we get
\begin{align*}
    l < T_0: \quad \sum_{t=l}^T \eta_t &= \sum_{t=l}^{T_0-1} \eta_t + \sum_{t=T_0}^T \eta_t = T_0-l + \sum_{t=T_0}^T \eta_t = T_0 - l  + \frac{(T+2-T_0)(T+1-T_0)}{2(T+1-T_0)} \\
    &= \tfrac12[T+2+T_0-2l].
\end{align*}
Note that this expression is still correct if we would plug in $l=T_0$.
Next, we compute the sum of squares. We start again with
\begin{align*}
    \hspace{-8ex}l\geq T_0: \quad \sum_{t=l}^T \eta_t^2 =  \frac{1}{(T+1-T_0)^2}\sum_{s=1}^{T+1-l} s^2 = \frac{(2T+3-2l)(T+2-l)(T+1-l)}{6(T+1-T_0)^2}.
\end{align*}
And similarly
\begin{align*}
    \hspace{-1ex}l < T_0: \quad \sum_{t=l}^T \eta_t^2 &= \sum_{t=l}^{T_0-1} \eta_t^2 + \sum_{t=T_0}^T \eta_t^2 =
    T_0 - l + \frac{(2T+3-2T_0)(T+2-T_0)(T+1-T_0)}{6(T+1-T_0)^2} \\
    &= T_0 - l + \frac{(2T+3-2T_0)(T+2-T_0)}{6(T+1-T_0)} \\
    &= T_0 - l - \frac{2T+5-2T_0}{6} + \frac{1}{6(T+1-T_0)} \\
    &= \tfrac16 \big[2T+4T_0+5-6l + \frac{1}{(T+1-T_0)} \big] .
\end{align*}
Here, we used that
\begin{align*}
    &(2T+3-2T_0)(T+2-T_0) = (2T+5-2T_0)(T+2-T_0)-2(T+2-T_0) \\
    &\hspace{10ex}= (2T+5-2T_0)(T+1-T_0) + (2T+5-2T_0) -2(T+2-T_0) \\
    &\hspace{10ex}= (2T+5-2T_0)(T+1-T_0) + 1.
\end{align*}
Again, the expression we obtain for $l<T_0$ is correct if we would plug in $l=T_0$.
Now we can try to compute the bound. We start with the easy ones: as $G_t=G>0$ for all $t\in \N$, we obtain
\begin{align}\label{eqn:wsd-easy-terms}
    \begin{split}
        \mathcal{T}_1(\eta_{1:T}, D, T) &= \frac{1}{2\sum_{t=1}^T \eta_t} D^2 = \frac{D^2}{T+T_0}, \\
    \frac{1}{2\sum_{t=1}^T \eta_t} \big(\sum_{t=1}^T \eta_t^2G_t^2\big) &= \frac{G^2}{2}\frac{\frac16[2T+4T_0+5-6]+ \frac{1}{6(T+1-T_0)}}{\frac12 (T+T_0)} \\
    &= \frac{G^2}{6(T+T_0)} \big[2T+4T_0-1 +  \tfrac{1}{T+1-T_0}\big] \leq \frac{G^2(T+2T_0)}{3(T+T_0)}.
    \end{split}
\end{align}
The last inequality is due to $T_0<T$ and thus $1 > \tfrac{1}{T+1-T_0}$.
Next, for $k=1,\ldots,T-1$, we need to compute
\begin{align*}
    \frac{\eta_k}{\sum_{t=k+1}^T \eta_t} \Big(\frac{1}{\sum_{t=k}^T \eta_t} \sum_{t=k}^T\eta_t^2 G_t^2\Big).
\end{align*}
Again, as $G_t=G>0$ for all $t\in \N$, we omit $G$ for now, and start with the case $k\geq T_0$:
\begin{align*}
    \frac{\eta_k}{\sum_{t=k+1}^T \eta_t} \Big(\frac{1}{\sum_{t=k}^T \eta_t} \sum_{t=k}^T\eta_t^2\Big)
    = \frac{\frac{T+1-k}{T+1-T_0} \cdot \frac{(2T+3-2k)(T+2-k)(T+1-k)}{6(T+1-T_0)^2}}{\frac{(T+2-k)(T+1-k)^2(T-k)}{4(T+1-T_0)^2}}
    = \frac{2(2T+3-2k)}{3(T+1-T_0)(T-k)}.
\end{align*}
Now, if $k < T_0$:
\begin{align*}
    \frac{\eta_k}{\sum_{t=k+1}^T \eta_t} \Big(\frac{1}{\sum_{t=k}^T \eta_t} \sum_{t=k}^T\eta_t^2\Big)
    &= \frac{1\cdot \frac16 \big[2T+4T_0+5-6k + \frac{1}{(T+1-T_0)} \big]}{\tfrac14[T+T_0-2k][T+2+T_0-2k]} \\
    &= \frac{2\big[2T+4T_0+5-6k + \frac{1}{(T+1-T_0)} \big]}{3[T+T_0-2k][T+2+T_0-2k]}.
\end{align*}

Now, compute
\begin{align*}
    &\sum_{k=1}^{T-1}\frac{\eta_k}{\sum_{t=k+1}^T \eta_t} \Big(\frac{1}{\sum_{t=k}^T \eta_t} \sum_{t=k}^T\eta_t^2\Big) = \\
    &\quad \frac23 \Bigg[\underbracket{\sum_{k=1}^{T_0-1} \frac{\big[2T+4T_0+5-6k + \frac{1}{(T+1-T_0)} \big]}{[T+T_0-2k][T+2+T_0-2k]}}_{:= \Omega_1} + 
    \underbracket{\sum_{k=T_0}^{T-1} \frac{(2T+3-2k)}{(T+1-T_0)(T-k)}}_{:=\Omega_2} 
    \Bigg] =: (*).
\end{align*}
Then, it holds 
\href{https://www.wolframalpha.com/input?i=sum+%282T%2B3-2k%29%2F%28%28T%2B1-U%29%28T-k%29%29%2C+k+%3D+U+to+T-1}{[link to proof]}
\begin{align*}
    \Omega_2 = \sum_{k=T_0}^{T-1} \frac{(2T+3-2k)}{(T+1-T_0)(T-k)} = \frac{2T-2T_0+ \frac{3}{T-T_0} + 3H_{T-T_0-1}}{T-T_0+1}.
\end{align*}
To simplify this term a bit, we estimate
\begin{align*}
    \frac{2}{3} \Omega_2 \leq \frac{4}{3} + \frac{2}{(T-T_0)^2} + \frac{2H_{T-T_0-1}}{T-T_0+1}.
\end{align*}
For $\Omega_1$, we can bound the nominator by
\begin{align*}
    2T+4T_0+5-6k + \tfrac{1}{T+1-T_0}  &= 3(T+2+T_0-2k) -(T-T_0+1) + \tfrac{1}{T+1-T_0} \\
    &\leq 3(T+2+T_0-2k) - (T-T_0),
\end{align*}
where for the second term we bound $\frac{1}{T+1-T_0} \leq 1$ due to $T_0\leq T$.
It holds
\href{https://www.wolframalpha.com/input?i=sum+1%2F%28%28T%2BU-2k%29%28T%2B2%2BU-2k%29%29%2C+k%3D1+to+U-1}{[link to proof]}
\begin{align*}
    \sum_{k=1}^{T_0-1} \frac{1}{[T+T_0-2k][T+2+T_0-2k]} = \frac{T_0-1}{(T-T_0+2)(T+T_0)}.
\end{align*}
Therefore, defining $\Omega_3:= \frac{(T-T_0)(T_0-1)}{(T-T_0+2)(T+T_0)}$ we get
\begin{align*}
    \Omega_1 \leq \Big( \sum_{k=1}^{T_0-1} \frac{3}{T+T_0-2k} \Big) - \Omega_3
    = 3(H_{T+T_0-2} - H_{T-T_0+1}) - \Omega_3,   
\end{align*}
where we used $T\geq T_0$. Altogether, we have
\begin{align*}
    (*) = \frac{2}{3}(\Omega_1+\Omega_2) \leq 2(H_{T+T_0-2} - H_{T-T_0+1}) - \frac{2}{3}\Omega_3 + \frac{4}{3} + \frac{2}{(T-T_0)^2} + \frac{2H_{T-T_0-1}}{T-T_0+1}.
\end{align*}
Multiplying this term with $\frac{G^2}{2}$, we can plug the result in \eqref{eqn:wsd-easy-terms} to finally derive the bound
\begin{align*}
     &\hspace{-1ex}2\sqrt{\mathcal{T}_1(\eta_{1:T}, D, T)\mathcal{T}_2(\eta_{1:T}, G_{1:T}, T)} \leq \\
     &2DG\sqrt{\tfrac{1}{T+T_0}\Bigg[ \tfrac{T+2T_0}{3(T+T_0)}  +H_{T+T_0-2} - H_{T-T_0+1} - \tfrac{(T-T_0)(T_0-1)}{3(T-T_0+2)(T+T_0)} + \tfrac{2}{3} + \tfrac{1}{(T-T_0)^2} + \tfrac{H_{T-T_0-1}}{T-T_0+1} \Bigg]}.
\end{align*}
\end{proof}


\end{document}